\DeclareMathOperator{\E}{\mathsf{E}}
\DeclareMathOperator*{\opmax}{\vee}
\DeclareMathOperator*{\conv}{conv}
\DeclareMathOperator*{\pipes}{\|}
\newcommand{\loss}{\ell}
\newcommand{\xsloss}{\mathcal{L}_f}
\newcommand{\reals}{\mathbb{R}}
\newcommand{\F}{\mathcal{F}}
\newcommand{\G}{\mathcal{G}}
\newcommand{\X}{\mathcal{X}}
\newcommand{\Y}{\mathcal{Y}}
\newcommand{\Z}{\mathcal{Z}}
\newcommand{\nalg}{n_{\textsc{I}}}
\newcommand{\nboost}{n_{\textsc{II}}}
\newcommand{\Zboost}{\mathbf{Z}_{\textsc{II}}}
\newcommand{\alg}{\mathcal{A}}
\newcommand{\Lip}{L}
\newcommand{\gap}{\textsc{gap}}
\newcommand{\BR}[1]{\textsc{BayesRed}_{\eta} \left( #1, \pi \right)}
\title{Fast rates with high probability in exp-concave \\ statistical learning}
\author{Nishant A. Mehta \\ 
             Centrum Wiskunde \& Informatica (CWI) \\
             \texttt{mehta@cwi.nl}}
\date{}
\begin{document}

\maketitle

\begin{abstract}
We present an algorithm for the statistical learning setting with a bounded exp-concave loss in $d$ dimensions that obtains excess risk $O(d \log(1/\delta)/n)$ with probability at least $1 - \delta$. 
The core technique is to boost the confidence of recent in-expectation $O(d/n)$ excess risk bounds for empirical risk minimization (ERM), without sacrificing the rate, by leveraging a Bernstein condition which holds due to exp-concavity. 
We also show that with probability $1 - \delta$ the standard ERM method obtains excess risk $O(d (\log(n) + \log(1/\delta))/n)$. 
We further show that a regret bound for any online learner in this setting translates to a high probability excess risk bound for the corresponding online-to-batch conversion of the online learner. 
Lastly, we present two high probability bounds for the exp-concave model selection aggregation problem that are quantile-adaptive in a certain sense. 
The first bound is a purely exponential weights type algorithm, obtains a nearly optimal rate, and has no explicit dependence on the Lipschitz continuity of the loss. 
The second bound requires Lipschitz continuity but obtains the optimal rate.
\end{abstract}

\section{Introduction}

In the statistical learning problem, a learning agent observes a samples of $n$ points $Z_1, \ldots, Z_n$ drawn i.i.d.~from an unknown distribution $P$ over an outcome space $\Z$. The agent then seeks an action $f$ in an action space $\F$ that minimizes their expected loss, or risk, $\E_{Z \sim P} [ \loss(f, Z) ]$, where $\loss$ is a loss function $\loss \colon \F \times \Z \to \reals$. 
Several recent works have studied this problem in the situation where the loss is exp-concave and bounded, 
$\F$ and $\Z$ are subsets of $\reals^d$, 
and $\F$ is convex. 
\cite{mahdavi2015lower} were the first to show that there exists a learner for which, with probability at least $1 - \delta$, the excess risk decays at the rate $d ( \log n + \log(1/\delta)) / n$. 
Via new algorithmic stability arguments applied to empirical risk minimization (ERM), \cite{koren2015fast} and \cite{gonen2016tightening} discarded the $\log n$ factor to obtain a rate of $d/ n$, but their bounds only hold in expectation. 
All three works highlighted the open problem of obtaining a high probability excess risk bound with the rate $d \log(1/\delta) / n$. 
Whether this is possible is far from a trivial question in light of a result of \cite{audibert2008progressive}: when learning over a finite class with bounded $\eta$-exp-concave losses, 
the progressive mixture rule (a Ces\`aro mean of pseudo-Bayesian estimators) with learning rate $\eta$ obtains expected excess risk $O(1/n)$ but, for \emph{any} learning rate, these rules suffer from severe deviations of order $\sqrt{\log (1/\delta) / n}$.

This work resolves the high probability question: we present a learning algorithm with an excess risk bound (Corollary \ref{cor:whp-bounds}) which has rate $d \log(1/\delta) / n$ with probability at least $1 - \delta$. 
ERM also obtains $O((d \log(n) + \log(1/\delta)) / n)$ excess risk, 
a fact that apparently was not widely known although it follows from results in the literature. 
To vanquish the $\log n$ factor with the small $\log(1/\delta)$ price it 
suffices to run a two-phase ERM method based on a confidence-boosting device. 
The key to our analysis is connecting exp-concavity to the \emph{central condition} of \Citet{vanerven2015fast}, which in turn implies a Bernstein condition. 
We then exploit the variance control of the excess loss random variables afforded by the Bernstein condition to \emph{boost} the boosting the confidence trick of \cite{schapire1990strength}.

In the next section, we discuss a brief history of the work in this area. 
In Section \ref{sec:setup-expectation}, we formally define the setting and describe the previous $O(d/n)$ in-expectation bounds. 
We present the results for standard ERM and our confidence-boosted ERM method in Sections \ref{sec:erm} and \ref{sec:confidence-boost} respectively. 
Section \ref{sec:online-to-batch} extends the results of \cite{kakade2009generalization} to exp-concave losses, 
showing that under a bounded loss assumption a regret bound for \emph{any} online exp-concave learner transfers to a high probability excess risk bound 
via an online-to-batch conversion. 
This extension comes at no additional technical price: it is a consequence of the variance control implied by exp-concavity, control leveraged by Freedman's inequality for martingales to obtain a fast rate with high probability. 
This result continues the line of work of \cite{cesa2001generalization} and \cite{kakade2009generalization} and accordingly is about the generalization ability of online exp-concave learning algorithms. 
One powerful consequence of this result is a new guarantee for model selection aggregation: we present a method (Section \ref{sec:ms-aggregate}) for the model selection aggregation problem over finite classes with exp-concave losses that obtains a rate of $O((\log|\F| + \log n) / n)$ with high probability, with no explicit dependence on the Lipschitz continuity of the loss function. All previous bounds of which we are aware have explicit dependence on the Lipschitz continuity of the problem. Moreover, the bound is a quantile-like bound in that it improves with the prior measure on a subclass of nearly optimal hypotheses.

\section{A history of exp-concave learning}
\label{sec:previous}

Learning under exp-concave losses with finite classes dates back to the seminal work of \cite{vovk1990aggregating} and the game of prediction with expert advice, with the first explicit treatment for exp-concave losses due to \cite{kivinen1999averaging}. \cite{vovk1990aggregating} showed that if a game is $\eta$-mixable (which is implied by $\eta$-exp-concavity), one can guarantee that the worst-case individual sequence regret against the best of $K$ experts is at most $\frac{\log K}{\eta}$. An online-to-batch conversion then implies an in-expectation excess risk bound of the same order in the stochastic i.i.d.~setting. 

\cite{audibert2008progressive} showed that when learning over a finite class with exp-concave losses, no progressive mixture rule can obtain a high probability excess risk bound of order better than $\sqrt{\log(1/\delta) / n}$. ERM fares even worse, with a lower bound of $\sqrt{\log |\F| / n}$ \emph{in expectation}.  \citep{juditsky2008learning}. 
\cite{audibert2008progressive} overcame the deviations shortcoming of progressive mixture rules via his \emph{empirical star} algorithm, which first runs ERM on $\F$, obtaining $\hat{f}_{\textsc{erm}}$, and then runs ERM a second time on the star convex hull of $\F$ with respect to $\hat{f}_{\textsc{erm}}$. This algorithm achieves $O(\log|\F| / n)$ with high probability; 
the rate was only proved for squared loss with targets $Y$ and predictions $\hat{y}$ in $[-1, 1]$, but it was claimed that the result can be extended to general, bounded losses $\hat{y} \mapsto \loss(y, \hat{y})$ satisfying smoothness and strong convexity as a function of predictions $\hat{y}$. Under similar assumptions, \cite{lecue2014optimal} proved that a method, $Q$-aggregation, also obtains this rate but can further take into account a prior distribution. 

For convex classes, such as $\F \subset \reals^d$ as we consider here, 
\cite{hazan2007logarithmic} designed the Online Newton Step (ONS) 
and Exponentially Weighted Online Optimization (EWOO) 
algorithms. Both have $O(d \log n)$ regret over $n$ rounds, which, after online-to-batch conversion yields $O(d \log(n) / n)$ excess risk in expectation. 
Until recently, it was unclear whether one could obtain a similar high probability result; however, \cite{mahdavi2015lower} showed that an online-to-batch conversion of ONS enjoys excess risk bounded by $O(d \log(n) / n)$ with high probability. 
While this resolved the statistical complexity of learning up to $\log n$ factors, 
ONS (though efficient) can have a high computational cost of $O(d^3)$ even in simple cases like learning over the unit $\ell_2$ ball, and in general its complexity may be as high as $O(d^4)$ per projection step \citep{hazan2007logarithmic, koren2013open}.

If one hopes to eliminate the $\log n$ factor, the additional hardness of the online setting makes it unlikely that one can proceed via an online-to-batch conversion approach. Moreover, computational considerations suggest circumventing ONS anyways. 
In this vein, as we discuss in the next section both \cite{koren2015fast} and \cite{gonen2016tightening} recently established in-expectation excess risk bounds for a lightly penalized ERM algorithm and ERM itself respectively, without resorting to an online-to-batch conversion. Notably, both works developed arguments based on algorithmic stability, thereby circumventing the typical reliance on chaining-based arguments to discard $\log n$ factors. 
Table \ref{tab:bounds} summarizes what is known and our new results.

\begin{table*}[t]
\small
\centering
\begin{tabular}{l l l l l}
\toprule
& \multicolumn{2}{c}{Convex $\F$\hphantom{10mm}} & \multicolumn{2}{c}{Finite $\F$\hphantom{10mm}} \\ 
\cmidrule(r){2-3}
\cmidrule(l){4-5}
\toprule
Algorithm         & Expectation  & Probability $1 - \delta$ & Expectation & Probability $1 - \delta$ \\ 
\midrule
Progressive mixture & --- & --- & $\log |\F|/n$ & $\Omega(\sqrt{\log(1/\delta) / n})$ \\
Empirical star / $Q$-agg.                  & ---& --- & $\log |\F|/n$ & $(\log |\F| + \log(1/\delta)) / n$ \\
Online Newton Step        & $d \log n/n$  & $d (\log n + \log(1/\delta)) / n$ & --- & --- \\
EWOO                            & $d \log n/n$  & $\boldsymbol{(d \log n + \log(1/\delta)) / n}$ & --- & --- \\
ERM                                 & $d/n$             & $\boldsymbol{(d \log n + \log(1/\delta)) / n}$ & $\Omega(\sqrt{\log|\F|/n})$ & --- \\
Boosted ERM                  &   ---                    & $\boldsymbol{d \log(1/\delta) / n}$ & --- & --- \\
\bottomrule
\end{tabular}
\caption{\label{tab:bounds} Excess risk bounds, with new results in bold. Excluding $\Omega(\cdot)$ bounds, all bounds are big-O upper bounds. Boosted ERM applies \textsc{ConfidenceBoost} to ERM. 
``ERM'' is either penalized ERM \citep{koren2015fast} or ERM \citep{gonen2016tightening}. For simplicity we only show dependence in $d$, $n$, and $\delta$, and we also restrict $Q$-aggregation to uniform prior.}
\end{table*}

\section{Rate-optimal in-expectation bounds}
\label{sec:setup-expectation}

We now describe the setting more formally. 
In this work $\F$ is always assumed to be convex, except in Section \ref{sec:ms-aggregate}, which studies the model selection aggregation problem for countable classes. 
We say a function $A \colon \F \to \reals$ has diameter $C$ if $\sup_{f_1, f_2 \in\F} | A(f_1) - A(f_2) | \leq C$. 
Assume for each $z \in \Z$ that the loss map $\loss(\cdot, z) \colon f \mapsto \loss(f, z)$ is $\eta$-exp-concave, i.e.~$f \mapsto e^{-\eta \loss(f, z)}$ is concave over $\F$. 
We further assume, for each outcome $z$, that the loss $\loss(\cdot, z)$ has diameter $B$. 
We adopt the notation $\loss_f(z) := \loss(f, z)$. 
Given a sample of $n$ points drawn i.i.d.~from an unknown distribution $P$ over $\Z$, our objective is to select a hypothesis $f \in \F$ that minimizes the excess risk $\E_{Z \sim P} [ \loss_f(Z) ] - \inf_{f \in \F} \E_{Z \sim P} [ \loss_f(Z) ]$. We assume that there exists $f^* \in \F$ satisfying $\E [ \loss_{f^*}(Z) ] = \inf_{f \in \F} \E_{Z \sim P} [ \loss_f(Z) ]$; this assumption also was made by \cite{gonen2016tightening} and \cite{kakade2009generalization}.\footnote{This assumption is not explicit from \cite{koren2015fast}, but their other assumptions might imply it. 
Regardless, if their results and those of \cite{gonen2016tightening} hold, 
our analysis in Section \ref{sec:confidence-boost} can be adapted to work if the infimal risk is not achieved, i.e.~if $f^* \in \F$ does not exist.}

Let $\alg_\F$ be an algorithm, defined for a function class $\F$ as a mapping $\alg_\F \colon \bigcup_{n \geq 0} \Z^n \to \F$; we drop the subscript $\F$ when it is clear from the context. 
Our starting point will be an algorithm $\alg$ which, when provided with a sample $\mathbf{Z}$ of $n$ i.i.d.~points, satisfies an expected risk bound of the form
\begin{align} 
\E_{\mathbf{Z} \sim P^n} \left[ \E_{Z \sim P} \left[ \loss_{\alg(\mathbf{Z})}(Z) - \loss_{f^*}(Z) \right] \right] \leq \psi(n) . \label{eqn:expected-bound}
\end{align}
\cite{koren2015fast} and \cite{gonen2016tightening} both established in-expectation bounds of the form \eqref{eqn:expected-bound} that obtain a rate of $O(d/n)$ in the case when $\F \subset \reals^d$, each in a slightly different setting. 
\cite{koren2015fast} assume, for each outcome $z \in \Z$, that the loss $\loss(\cdot, z)$ has diameter $B$ and is $\beta$-smooth for some $\beta \geq 1$, i.e.~for all $f, f' \in \F$, the gradient is $\beta$-Lipschitz:
\begin{align*}
\|\nabla_f \loss(f, z) - \nabla_f \loss(f', z)\|_2 \leq \beta \|f - f'\|_2 .
\end{align*}
They also use a 1-strongly convex regularizer $\Gamma \colon \F \to \reals$ with diameter $R$. Under these assumptions, they show that ERM run with the weighted regularizer $\frac{1}{n} \Gamma$ has expected excess risk at most
\begin{align*}
\psi(n) = \frac{1}{n} \left( \frac{24 \beta d}{\eta} + 100 B d + R \right) .
\end{align*}
It is not known if the smoothness assumption is necessary to eliminate the $\log n$ factor.

\cite{gonen2016tightening} work in a slightly different setting that captures all \emph{known} exp-concave losses. 
They assume that the loss is of the form $\loss_f(z) = \phi_y(\langle f, x \rangle)$, for $\F \subset \reals^d$. They further assume, for each $z = (x,y)$, that the mapping $\hat{y} \mapsto \phi_y(\hat{y})$ is $\alpha$-strongly convex and $\Lip$-Lipschitz, but they do not assume smoothness. 
They show that standard, unregularized ERM has expected excess risk at most 
\begin{align*}
\psi(n) = \frac{2 \Lip^2 d}{\alpha n} = \frac{2 d}{\eta n} ,
\end{align*}
where $\eta = \alpha / \Lip^2$; the purpose of the rightmost expression is that the loss is $\eta$-exp-concave. Although this bound ostensibly is independent of the loss's diameter $B$, the dependence may be masked by $\eta$: for logistic loss, $\eta = e^{-B} /4$, while squared loss admits the more favorable $\eta = 1 / (4 B)^2$.

\section{A high probability bound for ERM}
\label{sec:erm}

As a warm-up to proving a high probability $O(d/n)$ excess risk bound, 
we first show that ERM itself obtains excess risk $O(d \log(n)/n)$ with high probability; 
here and elsewhere, if $\delta$ is omitted the dependence is $\log(1/\delta)$. 
That ERM satisfies such a bound was largely implicit in the literature, and so we make this result explicit. The closest such result, Theorem 1 of \cite{mahdavi2014excess}, does not apply as it relies on an additional assumption (see their Assumption (I)). 
Our assumptions subtly differ from elsewhere in this work. 
We assume that $\F \subset \reals^d$ satisfies $\sup_{f, f' \in \F} \| f - f' \|_2 \leq R$ 
and that, for each outcome $z \in \Z$, the loss $\loss(\cdot, z)$ is $\Lip$-Lipschitz 
and $|\loss_f(z) - \loss_{f^*}(z)| \leq B$. The first two assumptions already imply the last for $B = \Lip R$. 
All these assumptions were made by \cite{mahdavi2014excess} and \cite{koren2015fast}, sometimes implicitly, and while \cite{gonen2016tightening} only make the Lipschitz assumption, for all known $\eta$-exp-concave losses the constant $\eta$ depends on $B$ (which itself typically will depend on $R$).

The first, critical observation is that exp-concavity implies good concentration properties of the excess loss random variable. This is easiest to see by way of the $\eta$-central condition, which the excess loss satisfies. This concept, studied by \Citet{vanerven2015fast} and first introduced by \Citet{vanerven2012mixability} as ``stochastic mixability'', is defined as follows.

\begin{definition}[Central condition]
We say that $(P, \loss, \F)$ satisfies the \emph{$\eta$-central condition} for some $\eta > 0$ if there exists a comparator $f^* \in \F$ such that, for all $f \in \F$,
\begin{align*} 
\E_{Z \sim P} \left[ e^{-\eta (\loss_f(Z) - \loss_{f^*}(Z))} \right] \leq 1 .
\end{align*}
\end{definition}
Jensen's inequality implies that if this condition holds, the corresponding $f^*$ must be a risk minimizer. 
It is known \Citep[Section 4.2.2]{vanerven2015fast} that in our setting $(P, \loss, \F$) satisfies the $\eta$-central condition. 
\begin{lemma} \label{lemma:exp-concavity-to-central}
Let $\F$ be convex. Take $\loss$ to be a loss function $\loss \colon \F \times \Z \to \reals$, and assume that, for each $z \in \Z$, the map $\loss(\cdot, z) \colon f \mapsto \loss(f, z)$ is $\eta$-exp-concave. 
Then, for all distributions $P$ over $\Z$, if there exists an $f^* \in \F$ that minimizes the risk under $P$, then $(P, \loss, \F)$ satisfies the $\eta$-central condition.
\end{lemma}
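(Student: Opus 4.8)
The plan is to fix an arbitrary $f \in \F$ and show that $\E_{Z \sim P}\left[ e^{-\eta(\loss_f(Z) - \loss_{f^*}(Z))} \right] \le 1$, where $f^*$ is the hypothesized risk minimizer; since $f$ is arbitrary, this is exactly the $\eta$-central condition with comparator $f^*$. The device I would use is to interpolate along the segment from $f^*$ to $f$: by convexity of $\F$, the point $f_\lambda := (1-\lambda) f^* + \lambda f$ lies in $\F$ for every $\lambda \in [0,1]$, and $\eta$-exp-concavity says precisely that $u \mapsto e^{-\eta \loss(u, z)}$ is concave, hence concave along this segment for each fixed $z$.

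First I would record the pointwise consequence of exp-concavity,
\[
e^{-\eta \loss(f_\lambda, z)} \;\ge\; (1-\lambda)\, e^{-\eta \loss(f^*, z)} + \lambda\, e^{-\eta \loss(f, z)} ,
\]
and, writing $W_z := e^{-\eta(\loss_f(z) - \loss_{f^*}(z))}$ and dividing through by the strictly positive factor $e^{-\eta \loss_{f^*}(z)}$, rewrite it as $e^{-\eta(\loss_{f_\lambda}(z) - \loss_{f^*}(z))} \ge 1 + \lambda(W_z - 1)$. The right-hand side is at least $1 - \lambda > 0$ for $\lambda \in [0,1)$, so I may take $-\tfrac{1}{\eta}\log$ of both sides (a decreasing operation) and then integrate against $P$. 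Writing $R(g) := \E_{Z \sim P}[\loss_g(Z)]$ for the risk, this yields $R(f_\lambda) - R(f^*) \le -\tfrac{1}{\eta}\, \E_{Z \sim P}[\log(1 + \lambda(W_Z - 1))]$. The hypothesis enters here: because $f^*$ minimizes the risk over $\F$ and $f_\lambda \in \F$, the left-hand side is nonnegative, and therefore $\E_{Z \sim P}[\log(1 + \lambda(W_Z - 1))] \le 0$ for every $\lambda \in (0,1]$.

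It then remains to divide by $\lambda$ and let $\lambda \downarrow 0$. For each fixed $z$ the map $\lambda \mapsto \log(1 + \lambda(W_z - 1))$ is concave and vanishes at $\lambda = 0$ (note $W_z > 0$, so $1 + \lambda(W_z-1) > 0$ on $[0,1]$), hence its difference quotient $\tfrac{1}{\lambda}\log(1 + \lambda(W_z - 1))$ increases monotonically to $W_z - 1$ as $\lambda \downarrow 0$. Evaluated at $\lambda = 1$ the quotient integrates to $-\eta(R(f) - R(f^*))$, which is finite, so monotone convergence applies and gives
\[
\E_{Z \sim P}\!\left[ e^{-\eta(\loss_f(Z) - \loss_{f^*}(Z))} \right] - 1 \;=\; \lim_{\lambda \downarrow 0} \frac{1}{\lambda}\, \E_{Z \sim P}\big[\log(1 + \lambda(W_Z - 1))\big] \;\le\; 0 .
\]
This is the desired inequality, so the $\eta$-central condition holds with comparator $f^*$.

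The step I expect to require the most care is this passage to the limit. I route the argument deliberately through $\log(1 + \lambda(W_Z - 1))$ rather than differentiating $\lambda \mapsto \E_{Z \sim P}[e^{-\eta(\loss_{f_\lambda}(Z) - \loss_{f^*}(Z))}]$ at $0$: the latter would force me to differentiate a possibly nonsmooth exp-concave (hence merely convex) loss and to justify a dominated-convergence interchange, whereas the logarithmic formulation produces difference quotients that are monotone in $\lambda$, so monotone convergence suffices with no smoothness or uniform-integrability hypothesis, and uses only that $f^*$ is a global minimizer rather than any first-order condition. The one other thing to watch is finiteness of the risks, which I take as implicit in the existence of a risk minimizer; the monotone-convergence step in fact also certifies that $\E_{Z \sim P}[e^{-\eta(\loss_f(Z) - \loss_{f^*}(Z))}]$ is finite, since the limit of the nonpositive quantities $\tfrac{1}{\lambda}\E_{Z \sim P}[\log(1 + \lambda(W_Z-1))]$ cannot be $+\infty$.
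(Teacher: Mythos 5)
Your proof is correct, and it takes a more self-contained route than the paper. The paper's own proof only verifies that exp-concavity plus convexity of $\F$ yields the pseudoprobability-convexity (equivalently, stochastic mixability) condition of van Erven et al., and then invokes their Theorem 3.10 as a black box to pass from that condition to the central condition. You instead prove the implication directly: you perturb $f^*$ along the segment toward $f$, use concavity of $u \mapsto e^{-\eta \loss(u,z)}$ pointwise in $z$ to lower-bound $e^{-\eta(\loss_{f_\lambda}-\loss_{f^*})}$ by $1+\lambda(W_z-1)$, use global optimality of $f^*$ to conclude $\E[\log(1+\lambda(W_Z-1))]\le 0$, and let $\lambda\downarrow 0$. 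This is essentially an inlined, specialized version of the argument hiding inside the cited theorem, but your execution is careful and buys something concrete: the difference quotients of the concave function $\lambda\mapsto\log(1+\lambda(W_z-1))$ vanishing at $0$ are monotone, so monotone convergence suffices (anchored by integrability at $\lambda=1$, i.e.\ finiteness of $R(f)-R(f^*)$), with no smoothness of the loss, no dominated-convergence domination, and no first-order optimality condition. The paper's route is shorter on the page and situates the lemma within the general fast-rates framework of van Erven et al.; yours is elementary and verifiable without consulting that reference. The only hypothesis you lean on beyond the lemma statement is finiteness of the risks, which you flag explicitly and which is harmless in the paper's bounded-diameter setting.
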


With the central condition in our grip, Theorem 7 of \cite{mehta2014stochastic} directly implies an $O(d \log(n) / n)$ bound for ERM; however, a far simpler version of that result yields much smaller constants. The proof of the version below, in the appendix for completeness, only makes use of an $(\varepsilon / \Lip)$-net of $\F$ in the $\ell_2$ norm, which induces an $\varepsilon$-net of $\{ \loss_f : f \in \F \}$ in the sup norm.
\begin{theorem} \label{thm:erm-whp-bound}
Let $\F \subset \reals^d$ be a convex set satisfying $\sup_{f, f' \in \F} \| f - f' \|_2 \leq R$. Suppose, for all $z \in \Z$, that the loss $\loss(\cdot, z)$ is $\eta$-exp-concave and $\Lip$-Lipschitz. Let $\sup_{z \in \Z, f \in \F} |\loss_f(z) - \loss_{f^*}(z)| \leq B$. 
Then if $n \geq 5$, with probability at least $1 - \delta$, ERM learns a hypothesis $\hat{f}$ with excess risk bounded as
\begin{align}
\E_{Z \sim P} [ \loss_{\hat{f}}(Z) - \loss_{f^*}(Z) ] 
\leq \frac{1}{n} \left( 8 \left( B \opmax \frac{1}{\eta} \right) \left( d \log(16 \Lip R n) + \log \frac{1}{\delta} \right) + 1 \right) . \label{eqn:erm-whp-bound}
\end{align}
\end{theorem}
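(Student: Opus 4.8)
The plan is to bound the excess risk of the ERM hypothesis $\hat{f}$ by controlling, uniformly over $\F$, the gap between the true excess risk and its empirical counterpart. Writing $X_f := \loss_f - \loss_{f^*}$, the defining property of ERM gives $\frac{1}{n}\sum_{i=1}^n X_{\hat{f}}(Z_i) \le 0$, so it suffices to show that with high probability $\E[X_f] \le \frac{2}{n}\sum_{i=1}^n X_f(Z_i) + (\text{small})$ simultaneously for every $f$ in a suitable net. The fast rate will come from variance control: exp-concavity yields the $\eta$-central condition (Lemma \ref{lemma:exp-concavity-to-central}), and I would first convert this into a bound on the second moment of the excess loss in terms of its mean.

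For a fixed $f$, I would extract the variance control from $\E[e^{-\eta X_f}] \le 1$. Using the elementary lower bound $e^{-u} \ge 1 - u + \psi(\eta B)\,u^2$ valid for $u \le \eta B$, where $\psi(t) = (e^{-t} - 1 + t)/t^2$ is positive and decreasing, and substituting $u = \eta X_f$ (legitimate since $|X_f| \le B$), taking expectations and invoking $\E[e^{-\eta X_f}] \le 1$ gives $\E[X_f^2] \le c\,\E[X_f]$ with $c = 1/(\eta\,\psi(\eta B))$. A short check shows $c \asymp B \opmax \frac{1}{\eta}$, which is exactly the leading factor in \eqref{eqn:erm-whp-bound}. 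With this Bernstein-type variance bound in hand, Bernstein's inequality controls the lower tail of the empirical mean: for fixed $f$, with probability at least $1 - \delta$, $\E[X_f] \le \frac{1}{n}\sum_i X_f(Z_i) + \sqrt{2c\,\E[X_f]\log(1/\delta)/n} + O(B\log(1/\delta)/n)$. Since the deviation depends on $\sqrt{\E[X_f]}$, I would then close the bound by AM--GM ($\sqrt{2c\,\E[X_f]\log(1/\delta)/n} \le \frac{1}{2}\E[X_f] + c\log(1/\delta)/n$) and rearrange, turning the slow $\sqrt{\cdot}$ deviation into the fast rate $\E[X_f] \le \frac{2}{n}\sum_i X_f(Z_i) + O\bigl((B \opmax \tfrac1\eta)\log(1/\delta)/n\bigr)$.

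To pass from a fixed $f$ to the data-dependent $\hat{f}$, I would take an $(\varepsilon/\Lip)$-net of $\F$ in $\ell_2$; since $\F$ has $\ell_2$-diameter $R$, such a net has cardinality at most $(c'\Lip R/\varepsilon)^d$, and $\Lip$-Lipschitzness guarantees it induces an $\varepsilon$-net of $\{\loss_f : f \in \F\}$ in the sup norm. Applying the fixed-$f$ bound with confidence $\delta/N$ at each net point and union bounding replaces $\log(1/\delta)$ by $d\log(c'\Lip R/\varepsilon) + \log(1/\delta)$. For $\hat{f}$ I would select the nearest net point $g$; the sup-norm approximation perturbs both $\E[X_{\hat f}]$ and the empirical average by at most $\varepsilon$, so combining with $\frac1n\sum_i X_{\hat f}(Z_i)\le 0$ yields $\E[X_{\hat f}] \le O(\varepsilon) + \frac{1}{n}\,8\bigl(B\opmax\tfrac1\eta\bigr)\bigl(d\log(c'\Lip R/\varepsilon)+\log(1/\delta)\bigr)$. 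Choosing $\varepsilon$ of order $1/n$ makes the discretization error $O(1/n)$ (the trailing $+1$ in \eqref{eqn:erm-whp-bound}) and converts $\log(c'\Lip R/\varepsilon)$ into $\log(16\Lip R n)$ after fixing constants.

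The main obstacle I anticipate is the first step: squeezing the variance control out of the central condition with a constant that is genuinely $\asymp B \opmax \frac1\eta$ uniformly across both regimes (small $\eta B$, where $c \approx 2/\eta$, and large $\eta B$, where $c \approx B$), since a careless bound would lose a factor that degrades the rate. The remaining steps — Bernstein's inequality, the self-bounding AM--GM that converts the slow rate into the fast one, and the net and union-bound bookkeeping — are routine; the only real care needed there is tracking the constants tightly enough to land the explicit numbers ($8$, $16$, and the trailing $+1$) appearing in the statement.
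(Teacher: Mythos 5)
Your proposal is correct and would yield the stated bound (up to the constant bookkeeping you defer), but the concentration step at its core is genuinely different from the paper's. The shared skeleton is the same: Lemma \ref{lemma:exp-concavity-to-central} to get the $\eta$-central condition, a proper $(\varepsilon/\Lip)$-net of $\F$ in $\ell_2$ of cardinality $(O(\Lip R/\varepsilon))^d$ inducing a sup-norm net of the loss class, a union bound, $\varepsilon \asymp 1/n$ for the trailing $+1$, and transfer to $\hat f$ via ERM's nonpositive empirical excess loss. Where you diverge is the per-function tail bound: you first convert the central condition into the second-moment bound $\E[X_f^2] \le c\,\E[X_f]$ with $c \asymp B \opmax \tfrac1\eta$ (this is essentially the paper's Lemma \ref{lemma:central-to-bernstein}, and your $\psi$-based derivation is sound --- it is the same $x^2/(e^{-x}+x-1) \le 2+x$ computation the paper does there), and then run Bernstein's inequality plus the AM--GM self-bounding step. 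The paper instead works directly with the moment generating function of $-\xsloss$, importing Theorems 1 and 3 and Lemma 4 of \cite{mehta2014stochastic} to bound $\log \E[e^{-(\eta_f/2)\xsloss}]$ and applying a Chernoff bound; this forces it to prune ``empirically inadmissible'' functions and to pass to the per-function $\eta_f \ge \eta$ achieving equality in the central condition, technicalities your route avoids entirely. Amusingly, your argument is exactly the one the paper itself uses in Lemma \ref{lemma:erm-bernstein} for the finite-class ERM phase of \textsc{ConfidenceBoost}, so you have in effect re-proved Theorem \ref{thm:erm-whp-bound} as ``Lemma \ref{lemma:central-to-bernstein} + Lemma \ref{lemma:erm-bernstein} + a net,'' which is more self-contained than the paper's proof; what the paper's MGF route buys is slightly sharper explicit constants from the $0.18$-type bounds of the earlier work. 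The only points to make explicit in a full write-up are that the cover must be \emph{proper} (so the central condition, hence the variance bound, actually applies at each net point --- the paper pays a factor $2$ in the radius for this) and that $\E[X_f] \ge 0$ (needed for the variance bound and the AM--GM step), which holds since $f^*$ is a risk minimizer.
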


\section{Boosting the confidence for high probability bounds}
\label{sec:confidence-boost}

The two existing excess risk bounds mentioned in Section \ref{sec:setup-expectation} decay at the rate $1/n$. A na\"ive application of Markov's inequality unsatisfyingly yields excess risk bounds of order $\psi(n) / \delta$ that hold with probability $1 - \delta$. 
In this section, we present and analyze our meta-algorithm, \textsc{ConfidenceBoost}, which boosts these in-expectation bounds to hold with probability at least $1 - \delta$ at the price of $\log(1/\delta)$ factor. 
This method is essentially the ``boosting the confidence'' trick of \cite{schapire1990strength};\footnote{See also Chapter 4.2 of \cite{kearns1994introduction}.}  
the novelty lies in a refined analysis that exploits a Bernstein-type condition to improve the rate in the final high probability bound from the typical $O(1/\sqrt{n})$ to the desired $O(1/n)$.

Our analysis of \textsc{ConfidenceBoost} actually applies more generally than the exp-concave learning setting, requiring only that $\mathcal{A}$ satisfy an in-expectation bound of the form \eqref{eqn:expected-bound}, the loss $\loss(\cdot, z)$ have bounded diameter for each $z \in \Z$, and the problem $(P, \loss, \F)$ satisfy a \emph{$(C, q)$-Bernstein condition}.

\begin{definition}[Bernstein condition]
We say that $(P, \loss, \F)$ satisfies the \emph{$(C, q)$-Bernstein condition} for some $C > 0$ and $q \in (0, 1]$ if there exists a comparator $f^* \in \F$ such that, for all $f \in \F$,
\begin{align*} 
\E_{Z \sim P} \left[ \left( \loss_f(Z) - \loss_{f^*}(Z) \right)^2 \right] 
\leq C \E_{Z \sim P} \left[ \loss_f(Z) - \loss_{f^*}(Z) \right]^q .
\end{align*}
\end{definition}

Before getting to \textsc{ConfidenceBoost}, we first show that the exp-concave learning setting satisfies the Bernstein condition with the best exponent, $q = 1$, and so is a special case of the more general setting we analyze. 
Recall from Lemma \ref{lemma:exp-concavity-to-central} that the $\eta$-central condition holds for $(P, \loss, \F)$. The next lemma, which adapts a result of \Citet{vanerven2015fast}, shows that the $\eta$-central condition, together with boundedness of the loss, implies that a Bernstein condition holds.
\begin{lemma}[Central to Bernstein] \label{lemma:central-to-bernstein}
Let $X$ be a random variable taking values in $[-B, B]$. Assume that $\E [ e^{-\eta X} ] \leq 1$. 
Then $\E [ X^2 ] \leq 4 \left( 1 / \eta + B \right) \E [ X ]$.
\end{lemma}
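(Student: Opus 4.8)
The plan is to extract a quadratic lower bound on the exponential moment and feed it into the central-condition hypothesis $\E[e^{-\eta X}] \leq 1$. First I would record the elementary analytic fact that the function $g(x) := (e^{-x} - 1 + x)/x^2$ (with $g(0) := 1/2$) is non-increasing on $\reals$; this follows because $g(x) = \psi(-x)$, where $\psi(x) := (e^{x} - 1 - x)/x^2$ is the standard \emph{increasing} function. Consequently, for every $x \leq M$ we have $e^{-x} - 1 + x \geq g(M)\, x^2$. Since $X \in [-B, B]$ gives $\eta X \leq \eta B =: M$, substituting $x = \eta X$ yields the pointwise bound $e^{-\eta X} \geq 1 - \eta X + g(\eta B)\, \eta^2 X^2$.

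Next I would take expectations and invoke the hypothesis. From $\E[e^{-\eta X}] \leq 1$, the pointwise bound gives $1 \geq 1 - \eta\, \E[X] + g(\eta B)\, \eta^2\, \E[X^2]$, which rearranges to $\E[X^2] \leq C\, \E[X]$ with $C = \frac{1}{\eta\, g(\eta B)} = \frac{\eta B^2}{e^{-\eta B} - 1 + \eta B}$. (Jensen applied to $\E[e^{-\eta X}] \leq 1$ also shows $\E[X] \geq 0$, so the right-hand side is nonnegative and the bound is meaningful.)

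It then remains to bound the constant as $C \leq 4(1/\eta + B)$. Writing $t := \eta B \geq 0$ and clearing denominators, this is equivalent to the single-variable inequality $t^2 \leq 4(1+t)(e^{-t} - 1 + t)$ for all $t \geq 0$. I would prove this by setting $h(t) := 4(1+t)(e^{-t} - 1 + t) - t^2 = 4 e^{-t}(1+t) - 4 + 3 t^2$, noting $h(0) = 0$, and computing $h'(t) = 2t(3 - 2 e^{-t}) \geq 0$ since $e^{-t} \leq 1$ for $t \geq 0$; hence $h \geq 0$ throughout. I expect this last elementary inequality---together with choosing a clean closed-form constant that dominates $C$---to be the only delicate step, since the quadratic lower bound and the monotonicity of $g$ are standard.
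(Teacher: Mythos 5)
Your proof is correct, and it takes a genuinely different route from the paper's. You derive the bound from first principles via the classical Bennett-type argument: the monotonicity of $x \mapsto (e^{-x}-1+x)/x^2$ gives the pointwise quadratic lower bound $e^{-\eta X} \geq 1 - \eta X + g(\eta B)\,\eta^2 X^2$ on the support of $X$, and taking expectations against the hypothesis $\E[e^{-\eta X}] \leq 1$ immediately yields $\E[X^2] \leq \frac{1}{\eta\, g(\eta B)}\E[X]$. The paper instead imports part 2 of Theorem 5.4 of van Erven et al.\ (2015) as a black box (specialized to constant $v \equiv \eta$ and $\varepsilon = 0$), which gives the intermediate bound $\E[X^2] \leq \frac{2}{\eta\,\kappa(-2\eta B)}\E[X]$ with $\kappa(x) = (e^x - x - 1)/x^2$, and then devotes its effort to the elementary inequality $\frac{x^2}{e^{-x}+x-1} \leq 2 + x$, proved by reducing to $\tanh(x) \leq x$. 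Both arguments terminate in a single-variable calculus fact --- yours is $t^2 \leq 4(1+t)(e^{-t}-1+t)$, verified correctly by your computation $h'(t) = 2t(3 - 2e^{-t}) \geq 0$ --- and both land on the same final constant $4(1/\eta + B)$. What your approach buys is self-containedness and, incidentally, a sharper intermediate constant (your $\frac{1}{\eta g(\eta B)}$ behaves like $2/\eta$ as $\eta B \to 0$, versus $4/\eta$ for the paper's, because the imported theorem pays for its generality); this is why you can afford the looser elementary bound $4(1+t)$ where the paper needs the tighter $2+x$. The only point a careful reader might ask you to justify is the monotonicity of $\psi(x) = (e^x-1-x)/x^2$ on all of $\reals$ (not merely on $x \geq 0$, since $\eta X$ can be negative); this is indeed the standard lemma underlying Bennett's inequality and follows, e.g., from the representation $\psi(x) = \int_0^1\!\!\int_0^1 s\, e^{xst}\,dt\,ds$, so a one-line citation or remark would close that gap.
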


\paragraph{Boosting the boosting-the-confidence trick.}
\setlength{\intextsep}{0 pt}
\begin{figure}[t]
    \begin{algorithm}[H]
      \SetAlgoNoLine
      \DontPrintSemicolon
      \KwIn{$\mathbf{Z}_1, \ldots, \mathbf{Z}_K \overset{iid}{\sim} P^{\nalg}$, $\Zboost \sim P^{\nboost}$, learner $\alg_\F$}
      \lFor{$j = 1 \to K$}{$\hat{f}_j = \alg_\F(\mathbf{Z}_j$)}
      \Return $\mathrm{ERM}_{\F_K}(\Zboost)$, with $\F_K = \{\hat{f}_1, \ldots, \hat{f}_K\}$\;
      \caption{\label{alg:boost} \textsc{ConfidenceBoost}}
    \end{algorithm}
\end{figure}

 First, consider running $\alg$ on a sample $\mathbf{Z}_1$ of $n$ i.i.d.~points. The excess risk random variable $\E_Z [ \loss_{\alg(\mathbf{Z}_1)}(Z) - \loss_{f^*}(Z) ]$ is nonnegative, and so Markov's inequality and the expected excess risk being bounded by $\psi(n)$ imply that
\begin{align*} 
\Pr \left( \E_Z [ \loss_{\alg(\mathbf{Z}_1)}(Z) - \loss_{f^*}(Z) ] \geq e \cdot \psi(n) \right) 
\leq \frac{1}{e} .
\end{align*}
Now, let $\mathbf{Z}_1, \ldots, \mathbf{Z}_K$ be independent samples, each of size $n$. Running $\alg$ on each sample yields 
$\hat{f}_1 := \alg(\mathbf{Z}_1), \ldots, \hat{f}_K := \alg(\mathbf{Z}_K)$. 
Applying Markov's inequality as above, combined with independence, implies that with probability at least $1 - e^{-K}$ there exists $j \in [K]$ such that
$\E_{Z \sim P} \bigl[ \loss_{\hat{f}_j}(Z) - \loss_{f^*}(Z) \bigr] \leq e \cdot \psi(n)$. 
Let us call this good event \textsc{good}.

Our quest is now to show that on event \textsc{good}, we can identify any of the hypotheses $\hat{f}_1, \ldots, \hat{f}_K$ approximately satisfying $\E_{Z \sim P} \bigl[ \loss_{\hat{f}_j}(Z) - \loss_{f^*}(Z) \bigr] \leq e \cdot \psi(n)$, where by ``approximately'' we mean up to some slack that weakens the order of our resulting excess risk bound by a multiplicative factor of at most $K$. 
As we will see, it suffices to run ERM over this finite subclass using a fresh sample. 
The proposed meta-algorithm is presented in Algorithm \ref{alg:boost}.

\paragraph{Analysis.}

From here on out, we treat the initial sample of size $K n$ as fixed and unhat the $K$ estimators above, referring to them as $f_1, \ldots, f_K$. Without loss of generality, we further assume that they are sorted in order of increasing risk (breaking ties arbitrarily). 
Our goal now is to show that running ERM on the finite class $\F_K := \{f_1, \ldots, f_K\}$ yields low excess risk with respect to comparator $f_1$. 
A typical analysis of the boosting the confidence trick would apply Hoeffding's inequality to select a risk minimizer optimal to resolution $1/\sqrt{n}$, but this is not good enough here. 
As a further boost to the trick, this time with respect to its resolution, we will establish that a Bernstein condition holds over a particular subclass of $\F_K$ with high probability, which will in turn imply that ERM obtains $O(1/n^{1/(2 - q)})$ excess risk over $\F_K$.

We first establish an \emph{approximate} Bernstein condition for $(P, \loss, \F_K)$. 
Since 
$\| \loss_{f_j} - \loss_{f_1} \|_{L_2(P)} 
\leq \| \loss_{f_j} - \loss_{f^*} \|_{L_2(P)} 
       + \| \loss_{f_1} - \loss_{f^*} \|_{L_2(P)}$ 
for all $f_j \in \F_K$, 
from the $(C, q)$-Bernstein condition, 
\begin{align*}
\| \loss_{f_j} - \loss_{f_1} \|_{L_2(P)}^2 
&\leq C \left( \E [ \loss_{f_j} - \loss_{f^*} ]^q 
                       + \E [ \loss_{f_1} - \loss_{f^*} ]^q 
                       + 2 \left( \E [ \loss_{f_j} - \loss_{f^*} ] \cdot \E [ \loss_{f_1} - \loss_{f^*} ] \right)^{q/2} 
             \right) \\
&\leq C \bigl( 3 \E [ \loss_{f_j} - \loss_{f^*} ]^q
                      + \E [ \loss_{f_1} - \loss_{f^*} ]^q \bigr) \\
&= C \bigl( 3 \left( \E [ \loss_{f_j} - \loss_{f_1} ] + \E [ \loss_{f_1} - \loss_{f^*} ] \right)^q
                      + \E [ \loss_{f_1} - \loss_{f^*} ]^q \bigr) \\
&\leq C \bigl( 3 \E [ \loss_{f_j} - \loss_{f_1} ]^q 
                       + 4 \E [ \loss_{f_1} - \loss_{f^*} ]^q 
             \bigr) ;
\end{align*}
where the last step follows because the map $x \mapsto x^q$ is concave and hence subadditive. 
We call this bound an approximate Bernstein condition because, on event \textsc{good}, for all $f_j \in \F_K$:
\begin{align*}
\| \loss_{f_j} - \loss_{f_1} \|_{L_2(P)}^2 \leq C \left( 3 \E [ \loss_{f_j} - \loss_{f_1} ]^q + 4 (e \cdot \psi(n))^q \right) .
\end{align*}
Define the class $\F_K'$ as the set $\{f_1\} \cup \left\{ f_j \in \F_K : \E [ \loss_{f_j} - \loss_{f_1} ] \geq 4^{1/q} e \cdot \psi(n) \right\}$. 
Then with probability $\Pr(\textsc{good}) \geq 1 - e^{-K}$, the problem $(P, \loss, \F_K')$ satisfies the $(4 C, q)$-Bernstein condition. 

We now analyze the outcome of running ERM on $\{f_1, \ldots, f_k\}$ using a fresh sample of $n$ points. The next lemma shows that ERM performs favorably under a Bernstein condition, a well-known result.

\begin{lemma} \label{lemma:erm-bernstein}
Let $\G$ be a finite class of functions $\{f_1, \ldots, f_K\}$ and assume without loss of generality that $f_1$ is a risk minimizer. Let $\G' \subset \G$ be a subclass for which, for all $f \in \G'$:
\begin{align*}
\E [ (\loss_f - \loss_{f_1})^2 ] \leq C \E [ \loss_f - \loss_{f_1} ]^q ,
\end{align*}
and $\loss_f - \loss_{f_1} \leq B$ almost surely. 
Then, with probability at least $1 - \delta$, ERM run on $\G$ will \emph{not} select any function $f$ in $\G'$ whose excess risk satisfies
\begin{align*}
\E [ \loss_f - \loss_{f_1} ] 
\geq \left( \frac{2 \left( C + \frac{B^{2-q}}{3} \right) \log \frac{|\G'| - 1}{\delta}}{n} \right)^{1/(2 - q)} .
\end{align*}
\end{lemma}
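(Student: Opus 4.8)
The plan is to reduce the statement to a per-function lower-tail concentration bound and then union bound over the offending functions. Fix any $f \in \G'$ and write $X_f(Z) := \loss_f(Z) - \loss_{f_1}(Z)$, so that its mean $\mu_f := \E[X_f]$ is exactly the excess risk $\E[\loss_f - \loss_{f_1}]$. Because $f_1 \in \G$ is always available to ERM, the event that ERM selects $f$ is contained in the event that $f$ beats $f_1$ empirically, i.e.\ $\frac{1}{n}\sum_{i=1}^n X_f(Z_i) \le 0$. Since $\mu_f > 0$ for the functions we care about, this is a downward deviation of the empirical mean of $X_f$ below its expectation by exactly $\mu_f$, and it suffices to bound its probability for each such $f$ and take a union bound.

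Next I would control this lower tail with Bernstein's inequality. The second moment is handled directly by the hypothesis: $\Var(X_f) \le \E[X_f^2] \le C\mu_f^q$, and the boundedness of $\loss_f - \loss_{f_1}$ by $B$ supplies the range needed for the linear correction term. Applied to the downward deviation, Bernstein's inequality yields
\begin{align*}
\Pr\!\left( \tfrac{1}{n}\textstyle\sum_{i} X_f(Z_i) \le 0 \right)
\le \exp\!\left( -\frac{n\,\mu_f^2}{2\left( C\mu_f^q + \tfrac{B}{3}\mu_f \right)} \right).
\end{align*}

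The key simplification --- and the step that produces the clean $1/(2-q)$ exponent --- is to make both terms in the denominator scale like $\mu_f^q$. Since every $f \in \G'$ satisfies $\mu_f \le B$ (here the stated bound $\loss_f - \loss_{f_1} \le B$ is used, via $\mu_f = \E[X_f] \le B$) and $q \in (0,1]$, we have $\mu_f = \mu_f^{1-q}\mu_f^q \le B^{1-q}\mu_f^q$, hence $\frac{B}{3}\mu_f \le \frac{B^{2-q}}{3}\mu_f^q$. Substituting, the exponent becomes $-\frac{n\mu_f^{2-q}}{2(C + B^{2-q}/3)}$. Requiring this to be at most $-\log\frac{|\G'|-1}{\delta}$ is precisely the condition $\mu_f \ge \bigl( 2(C + B^{2-q}/3)\log\frac{|\G'|-1}{\delta}/n \bigr)^{1/(2-q)}$, the threshold in the statement. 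Finally I would union bound over the functions in $\G' \setminus \{f_1\}$ with excess risk at or above this threshold; there are at most $|\G'| - 1$ of them (the minimizer $f_1$ has $\mu_{f_1} = 0$ and is excluded), so each contributes at most $\delta/(|\G'|-1)$ and the total failure probability is at most $\delta$.

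The routine parts are the two inequalities feeding Bernstein (variance control from the Bernstein condition, range from boundedness) and the final union-bound bookkeeping. The only real subtlety is the conversion $\frac{B}{3}\mu_f \le \frac{B^{2-q}}{3}\mu_f^q$, which is where boundedness and $q \le 1$ are jointly used: this is what aligns the variance term and the range term so that both scale as $\mu_f^q$ and the $1/(2-q)$ rate falls out cleanly. I would also take care that the Bernstein bound is applied to the correct (downward) deviation of $\frac{1}{n}\sum_i X_f(Z_i)$, so that the excess-loss difference's boundedness enters on the relevant tail.
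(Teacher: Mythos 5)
Your argument is essentially the paper's own proof: Bernstein's inequality applied to the downward deviation of $\frac{1}{n}\sum_i X_f(Z_i)$, the Bernstein condition to control the variance term, the bound $\mu_f \le B^{1-q}\mu_f^q$ (valid since $\mu_f \le B$ and $q \le 1$) to make both denominator terms scale as $\mu_f^q$, and a union bound over the at most $|\G'|-1$ functions above the threshold. One caveat, which you share with the paper's proof: for the \emph{lower} tail of $\frac{1}{n}\sum_i X_f(Z_i)$, the one-sided range that Bernstein's inequality actually requires is a lower bound $X_f \ge -B$ (equivalently $\loss_{f_1}-\loss_f \le B$), whereas the stated hypothesis $\loss_f-\loss_{f_1}\le B$ only supplies the upper bound needed for $\mu_f \le B$; you flag the tail-direction issue but do not resolve it, and neither does the paper. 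In the paper's application the loss has diameter $B$, so $|\loss_f-\loss_{f_1}|\le B$ and the argument goes through; as literally stated, with only the one-sided bound, a heavy negative tail of $\loss_f-\loss_{f_1}$ consistent with the second-moment condition can inflate the failure probability, so the boundedness hypothesis should be read as two-sided.
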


Applying Lemma \ref{lemma:erm-bernstein} with $\G = \F_K$ and $\G' = \F_K'$, with probability at least $1 - \delta$ over the fresh sample, ERM selects a function $f_j$ falling in one of two cases:
\begin{itemize}
\item $\E_{Z \sim P} [ \loss_{f_j}(Z) - \loss_{f_1}(Z) ] \leq 4^{1/q} e \cdot \psi(n)$;
\item $\E_{Z \sim P} [ \loss_{f_j}(Z) - \loss_{f_1}(Z) ] \leq \left( \frac{2 \left( C + \frac{B^{2-q}}{3} \right) \log \frac{K}{\delta}}{n} \right)^{1 / (2 - q)}$ (using $|\F_K'| - 1 \leq K$).
\end{itemize}
We now run \textsc{ConfidenceBoost} with $K = \lceil \log(2/\delta) \rceil$ on a sample of $n$ points, with $\nalg = \frac{n}{2 K}$ and $\nboost = \frac{n}{2}$; for simplicity, we assume that $2 K$ divides $n$. Taking the failure probability for the ERM phase to be $\delta/2$, \textsc{ConfidenceBoost} admits the following guarantee.

\begin{theorem} \label{thm:whp-bound}
Let $(P, \loss, \F)$ satisfy the $(C, q)$-Bernstein condition, and assume for all $z \in \Z$ that the loss $\loss(\cdot, z)$ has diameter $B$. Impose any necessary assumptions such that algorithm $\alg$ obtains a bound of the form \eqref{eqn:expected-bound}. 
Then, with probability at least $1 - \delta$, \textsc{ConfidenceBoost} run with $K = \lceil \log(2 / \delta) \rceil$, $\nalg = n / (2 K)$, and $\nboost = n / 2$ learns a hypothesis $\hat{f}$ with excess risk 
$\E_{Z \sim P} [ \loss_{\hat{f}}(Z) - \loss_{f^*}(Z) ]$ at most
\begin{align}
e \cdot \psi \left( \frac{n}{2 \left\lceil \log \frac{2}{\delta} \right\rceil} \right) 
+ \max \left\{ 
        4^{1/q} e \cdot \psi \left( \frac{n}{2 \log \left\lceil \frac{2}{\delta} \right\rceil } \right), 
       \left( \frac{4 \bigl( C + \frac{B^{2-q}}{3} \bigr) \left( \log \frac{1}{\delta} + \log \lceil \log \frac{2}{\delta} \rceil \right)}{n} \right)^{1 / (2 - q)} 
   \right\} . \label{eqn:meta-bound} 
\end{align}
\end{theorem}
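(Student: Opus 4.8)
The plan is to bound the excess risk of the final ERM output $\hat{f}$ relative to the true risk minimizer $f^*$ via the decomposition $\E_Z[\loss_{\hat{f}} - \loss_{f^*}] = \E_Z[\loss_{\hat{f}} - \loss_{f_1}] + \E_Z[\loss_{f_1} - \loss_{f^*}]$, where $f_1$ denotes the lowest-risk member of the first-phase class $\F_K$. The two summands are controlled by the two independent phases of \textsc{ConfidenceBoost}, so the argument is largely a matter of assembling the ingredients already developed above---the approximate Bernstein condition and Lemma~\ref{lemma:erm-bernstein}---and carrying out the probabilistic bookkeeping across the two samples. The second summand will furnish the leading term of \eqref{eqn:meta-bound} and the first summand the max term.

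First I would handle the first-phase term $\E_Z[\loss_{f_1} - \loss_{f^*}]$. Applying Markov's inequality to the nonnegative excess-risk variable of $\alg$ on each of the $K$ independent samples of size $\nalg$, together with the in-expectation bound \eqref{eqn:expected-bound}, shows that each phase-I estimator fails to achieve excess risk $e \cdot \psi(\nalg)$ with probability at most $1/e$; by independence, the event \textsc{good} that at least one $\hat{f}_j$---and hence the best one, $f_1$---attains $\E_Z[\loss_{f_1} - \loss_{f^*}] \le e \cdot \psi(\nalg)$ satisfies $\Pr(\neg\textsc{good}) \le e^{-K}$. Choosing $K = \lceil \log(2/\delta) \rceil$ forces $e^{-K} \le \delta/2$, and substituting $\nalg = n/(2K)$ yields the leading term $e \cdot \psi\bigl(n/(2\lceil \log(2/\delta)\rceil)\bigr)$ of \eqref{eqn:meta-bound}.

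Next I would condition on the first sample, treating $f_1, \dots, f_K$ as fixed and sorted by increasing risk, and analyze the fresh-sample ERM phase over $\F_K$. On \textsc{good} the bound $\E_Z[\loss_{f_1} - \loss_{f^*}] \le e \cdot \psi(\nalg)$ is exactly what the approximate Bernstein derivation above needs in order to upgrade into a genuine Bernstein condition on the subclass $\F_K' = \{f_1\} \cup \{f_j : \E[\loss_{f_j} - \loss_{f_1}] \ge 4^{1/q} e \cdot \psi(\nalg)\}$. Invoking Lemma~\ref{lemma:erm-bernstein} with $\G = \F_K$, $\G' = \F_K'$, failure probability $\delta/2$, $\nboost = n/2$ samples, and $|\F_K'| - 1 \le K$, I conclude that, conditionally on \textsc{good}, with probability at least $1 - \delta/2$ the ERM output $\hat{f}$ is not a function of $\F_K'$ whose excess risk relative to $f_1$ exceeds the lemma's threshold. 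A short case split then delivers the max term of \eqref{eqn:meta-bound}: if $\hat{f} \notin \F_K'$ then $\E_Z[\loss_{\hat{f}} - \loss_{f_1}] < 4^{1/q} e \cdot \psi(\nalg)$ by the definition of $\F_K'$, while otherwise the ERM guarantee caps $\E_Z[\loss_{\hat{f}} - \loss_{f_1}]$ at the threshold, and substituting $K$ and $\nboost$ recovers the second branch.

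Finally I would combine the two phases by a union bound, so that the total failure probability is at most $\Pr(\neg\textsc{good}) + \Pr(\text{ERM fails} \mid \textsc{good}) \le \delta/2 + \delta/2 = \delta$. The main obstacle is the probabilistic bookkeeping rather than any single hard estimate: the Bernstein condition for $\F_K'$ is available only on the first-sample event \textsc{good}, so Lemma~\ref{lemma:erm-bernstein} must be applied \emph{conditionally} on the first sample in order to exploit the independence of $\Zboost$, and one must verify that this conditional ERM guarantee combines cleanly with the unconditional probability of \textsc{good}. What remains is arithmetic: tracking the factor-of-two splits in $n$ and $\delta$ and substituting $K$, $\nalg$, and $\nboost$ so that the constants match \eqref{eqn:meta-bound}.
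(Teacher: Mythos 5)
Your proposal is correct and follows essentially the same route as the paper: Markov plus independence over the $K$ first-phase runs to get the event \textsc{good}, the approximate Bernstein condition upgraded to a genuine $(4C,q)$-Bernstein condition on $\F_K'$, Lemma~\ref{lemma:erm-bernstein} applied to $\G=\F_K$, $\G'=\F_K'$ on the fresh sample, and a union bound splitting $\delta$ between the two phases. Your explicit remark that the ERM lemma must be applied conditionally on the first sample is a point the paper leaves implicit, but the argument and the resulting bookkeeping are the same.
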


The next result for exp-concave learning is immediate.
\begin{corollary} \label{cor:whp-bounds}
Applying Theorem \ref{thm:whp-bound} with $\alg_\F$ the algorithm of \cite{koren2015fast} and their assumptions (with $\beta \geq 1$), the bound in Theorem \ref{thm:whp-bound} specializes to
\begin{align}
O \left( \frac{\log \frac{1}{\delta}}{n} \left(  
    \frac{d \beta}{\eta} + d B + R \right) \right) . \label{eqn:boosted-koren-bound}
\end{align}
Similarly taking $\alg_\F$ the algorithm of \cite{gonen2016tightening} and their assumptions yields
\begin{align}
O \left( 
    \frac{\log \frac{1}{\delta}}{n} \left( \frac{d}{\eta} + B \right) 
\right) . \label{eqn:boosted-gonen-bound}
\end{align}
\end{corollary}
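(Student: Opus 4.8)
The plan is to specialize the general high-probability bound \eqref{eqn:meta-bound} by first pinning down the Bernstein parameters for the exp-concave setting and then substituting the two known rate functions $\psi$. The first step is to verify that $(P, \loss, \F)$ satisfies a $(C, q)$-Bernstein condition with the optimal exponent $q = 1$. Lemma \ref{lemma:exp-concavity-to-central} supplies the $\eta$-central condition, and applying Lemma \ref{lemma:central-to-bernstein} to the excess-loss variable $X = \loss_{f}(Z) - \loss_{f^*}(Z)$, which takes values in $[-B, B]$ by the diameter assumption, yields $\E[X^2] \leq 4(1/\eta + B)\,\E[X]$ for every $f \in \F$. Hence the $(C, 1)$-Bernstein condition holds with $C = 4(1/\eta + B)$, and Theorem \ref{thm:whp-bound} applies directly.

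With $q = 1$ the bound \eqref{eqn:meta-bound} simplifies considerably: $4^{1/q} = 4$, the exponent $1/(2-q)$ collapses to $1$, and $B^{2-q} = B$. The Bernstein term is therefore linear in $1/n$ and, after substituting $C = 4(1/\eta + B)$, equals $\frac{4(4/\eta + 13B/3)(\log(1/\delta) + \log\lceil\log(2/\delta)\rceil)}{n}$, which is $O\!\left(\frac{(1/\eta + B)\log(1/\delta)}{n}\right)$ once the $\log\log(1/\delta)$ factor is absorbed into $\log(1/\delta)$.

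Next I would handle the two $\psi$ terms. Both candidate algorithms have $\psi(m) = A/m$ for a sample-size-independent constant $A$, so evaluating $\psi$ at a sample size of order $n / \log(1/\delta)$ simply pulls the confidence factor into the numerator: $e \cdot \psi(n / (2\lceil\log(2/\delta)\rceil)) = O(A\log(1/\delta)/n)$, and the leading $4^{1/q} e\,\psi$ term inside the maximum is of the same order. For the algorithm of \cite{koren2015fast} one has $A = \frac{24\beta d}{\eta} + 100 B d + R$, so this contribution is $O\!\left(\frac{\log(1/\delta)}{n}\left(\frac{\beta d}{\eta} + B d + R\right)\right)$; for the algorithm of \cite{gonen2016tightening}, $A = \frac{2d}{\eta}$, giving $O\!\left(\frac{\log(1/\delta)}{n}\cdot\frac{d}{\eta}\right)$.

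The final step is to combine the $\psi$ contribution with the Bernstein contribution and to determine which term dominates; the only genuine care needed is in comparing $1/\eta + B$ against the leading constant $A$. For \cite{koren2015fast}, the standing assumption $\beta \geq 1$ (together with $d \geq 1$) gives $\frac{\beta d}{\eta} \geq \frac{1}{\eta}$ and $B d \geq B$, so the Bernstein term is entirely dominated and the bound collapses to \eqref{eqn:boosted-koren-bound}. For \cite{gonen2016tightening} the situation is slightly more delicate, and this is the one point I expect to require attention: the rate $\frac{2d}{\eta}$ carries no explicit $B$, so although the $1/\eta$ from the Bernstein term is absorbed into $\frac{d}{\eta}$, the additive $B$ is not, and it must survive into the final bound, yielding exactly the $\frac{d}{\eta} + B$ of \eqref{eqn:boosted-gonen-bound}. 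The main (mild) obstacle is thus conceptual rather than computational: one must notice that the diameter $B$, which is hidden inside $\eta$ in the in-expectation bound of \cite{gonen2016tightening}, resurfaces explicitly through the Bernstein constant in the high-probability regime.
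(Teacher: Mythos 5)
Your proposal is correct and follows exactly the route the paper intends: the paper declares the corollary ``immediate'' from Theorem \ref{thm:whp-bound} after instantiating the $(C,1)$-Bernstein condition with $C = 4(1/\eta + B)$ via Lemmas \ref{lemma:exp-concavity-to-central} and \ref{lemma:central-to-bernstein}, and substituting the two rate functions $\psi$, which is precisely what you do. Your observation that the additive $B$ in \eqref{eqn:boosted-gonen-bound} enters through the Bernstein constant rather than through $\psi$ is the right explanation for why it appears there but is absorbed in \eqref{eqn:boosted-koren-bound}.
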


\paragraph{Remarks.}
As we saw from Lemmas \ref{lemma:exp-concavity-to-central} and \ref{lemma:central-to-bernstein}, in the exp-concave setting a Bernstein condition holds for the class $\F$. A natural inquiry is if one could use this Bernstein condition to show directly a high probability fast rate of $O(d/n)$ for ERM. Indeed, under strong convexity (which is strictly stronger than exp-concavity), \cite{sridharan2009fast} show that a similar bound for ERM is possible; however, 
they used strong convexity to bound a localized complexity. 
It is unclear if exp-concavity can be used to bound a localized complexity, and the Bernstein condition alone seems insufficient; such a bound may be possible via ideas from the local norm analysis of \cite{koren2015fast}. 
While we think controlling a localized complexity from exp-concavity is a very interesting and worthwhile direction, we leave this to future work, and for now only conjecture that ERM also enjoys excess risk bounded by $O((d + \log(1/\delta)) /n)$ with high probability. This conjecture is from analogy to the empirical star algorithm of \cite{audibert2008progressive}, which for convex $\F$ reduces to ERM itself; note that the conjectured effect of $\log(1/\delta)$ is additive rather than multiplicative.

\section{Online-to-batch-conversion}
\label{sec:online-to-batch}

The present section's purpose is to show that if one is willing to accept the additional $\log n$ factor in a high probability bound, then it is sufficient to use an online-to-batch conversion of an online exp-concave learner whose worst-case cumulative regret (over $n$ rounds) is logarithmic in $n$. 
Using such a conversion, it is easy to get an excess risk bound with the additional $\log n$ factor that holds \emph{in expectation}. The key difficulty is making such a bound hold with high probability. 
This result provides an alternative to the high probability $O(\log n / n)$ result for ERM in Section \ref{sec:erm}. 

\cite{mahdavi2015lower} previously considered an online-to-batch conversion of ONS and established the first explicit high probability  $O(\log n / n)$ excess risk bound in the exp-concave statistical learning setting. 
Their analysis is elegant but seems to be intimately coupled to ONS; it consequently is unclear if their analysis can be used to grasp excess risk bounds by online-to-batch conversions of other online exp-concave learners. This leads to our next point and a new path: it is possible to transfer regret bounds to high probability excess risk bounds via online-to-batch conversion for general online exp-concave learners. Our analysis builds strongly on the analysis of \cite{kakade2009generalization} in the strongly convex setting.


We first consider a different, related setting: online convex optimization (OCO) under a $B$-bounded, $\nu$-strongly convex loss that is $\Lip$-Lipschitz with respect to the action. 
An OCO game unfolds over $n$ rounds. An adversary first selects a sequence of $n$ convex loss functions $c_1, \ldots, c_n$. In round $t$, the online learner plays $f_t \in \F$, the environment subsequently reveals cost function $c_t$, and the learner suffers loss $c_t(f_t)$. 
Note that the adversary is oblivious, and so the learner does not necessarily need to randomize. 
Because we are interested in analyzing the statistical learning setting, we constrain the adversary to play a sequence of $n$ points $z_1, \ldots, z_n \in \Z$, inducing cost functions $\loss(\cdot, z_1), \ldots, \loss(\cdot, z_n)$. 

Consider an online learner that sequentially plays actions $f_1, \ldots, f_n \in \F$ in response to $z_1, \ldots, z_n$, so that $f_t$ depends on $(z_1, \ldots, z_{t-1})$. The (cumulative) regret is defined as
\begin{align*}
\sum_{t=1}^n \loss_{f_t}(z_t) - \inf_{f \in \F} \sum_{t=1}^n \loss_f(z_t) .
\end{align*}
When the losses are bounded, strongly convex, and Lipschitz,  \cite{kakade2009generalization} showed that if an online algorithm has regret $\mathcal{R}_n$ on an i.i.d.~sequence $Z_1, \ldots, Z_n \sim P$, 
online-to-batch conversion by simple averaging of the iterates 
$\bar{f}_n := \frac{1}{n} \sum_{t=1}^n f_t$ admits the following guarantee.
\begin{theorem}[Cor.~5, \cite{kakade2009generalization}]
For all $z \in \Z$, assume that $\loss(\cdot, z)$ is bounded by $B$, $\nu$-strongly convex, and $\Lip$-Lipschitz. Then with probability at least $1 - 4 \log(n) \delta$ the action $\bar{f}_n$ satisfies excess risk bound
\begin{align*}
\E_{Z \sim P} [ \loss_{\bar{f}_n}(Z) - \loss_{f^*}(Z) ] 
\leq \frac{\mathcal{R}_n}{n} + 4 \sqrt{\frac{\Lip^2 \log \frac{1}{\delta}}{\nu}} \frac{\sqrt{\mathcal{R}_n}}{n} + \max \left\{ \frac{16 \Lip^2}{\nu}, 6 B \right\} \frac{\log \frac{1}{\delta}}{n} .
\end{align*}
\end{theorem}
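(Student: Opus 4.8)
The plan is to reduce the excess-risk bound to a high-probability control of the summed conditional risks of the iterates and then invoke Freedman's inequality, exploiting the variance control that strong convexity supplies. Writing $F(f) := \E_{Z \sim P}[\loss_f(Z)]$, I would first use convexity of each $\loss(\cdot, z)$ together with $\bar{f}_n = \frac1n \sum_{t=1}^n f_t$ and Jensen's inequality to obtain $F(\bar{f}_n) - F(f^*) \le \frac1n \sum_{t=1}^n (F(f_t) - F(f^*))$. Setting $r_t := F(f_t) - F(f^*) \ge 0$ for the conditional excess risk of $f_t$ (conditional on $Z_1, \ldots, Z_{t-1}$, on which $f_t$ depends), it then suffices to bound $\frac1n \sum_t r_t$.

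Next I would introduce the martingale difference sequence $\xi_t := r_t - (\loss_{f_t}(Z_t) - \loss_{f^*}(Z_t))$, which is mean-zero given the past since $\E[\loss_{f_t}(Z_t) \mid Z_1, \ldots, Z_{t-1}] = F(f_t)$. Summing and using that $\sum_t (\loss_{f_t}(Z_t) - \loss_{f^*}(Z_t)) \le \sum_t \loss_{f_t}(Z_t) - \inf_{f \in \F} \sum_t \loss_f(Z_t) = \mathcal{R}_n$ (because $f^*$ is feasible in the infimum) yields the deterministic decomposition $\sum_t r_t \le \mathcal{R}_n + \sum_t \xi_t$. The remaining task is to bound the martingale $\sum_t \xi_t$ with high probability. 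The key ingredient is variance control: since each $\loss(\cdot, z)$ is $\nu$-strongly convex, so is $F$, and as $f^*$ minimizes $F$ the first-order optimality condition gives $\frac{\nu}{2}\|f_t - f^*\|_2^2 \le r_t$; combined with $\Lip$-Lipschitzness, $(\loss_{f_t}(Z) - \loss_{f^*}(Z))^2 \le \Lip^2 \|f_t - f^*\|_2^2 \le \frac{2\Lip^2}{\nu} r_t$, so the total conditional variance of the $\xi_t$ is at most $\frac{2\Lip^2}{\nu} \sum_t r_t$. Boundedness of the loss difference also gives $|\xi_t| \le B$. This is exactly a $q = 1$ self-bounding (Bernstein-type) property of the martingale.

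The main obstacle is that this variance budget is itself the random quantity $\frac{2\Lip^2}{\nu}\sum_t r_t$ that we are trying to bound, so no deterministic variance can be fed into Freedman's inequality. I would resolve this by a \emph{peeling} argument: apply Freedman's inequality on each of the $O(\log n)$ dyadic scales of the total conditional variance and take a union bound over them. This stratification is precisely the source of the $4\log(n)$ multiplier on $\delta$ in the probability statement. On the resulting event, Freedman yields $\sum_t \xi_t \le \sqrt{\tfrac{4\Lip^2}{\nu}\log(\tfrac1\delta) \sum_t r_t} + \tfrac{2B}{3}\log(\tfrac1\delta)$.

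Finally, substituting this into $\sum_t r_t \le \mathcal{R}_n + \sum_t \xi_t$ gives a self-bounding quadratic inequality of the form $S \le c + a\sqrt{S}$ with $S = \sum_t r_t$, $a = 2\sqrt{\Lip^2 \log(\tfrac1\delta)/\nu}$, and $c = \mathcal{R}_n + \tfrac{2B}{3}\log(\tfrac1\delta)$. Solving it (via $\sqrt{a^2 + 4c} \le a + 2\sqrt{c}$ and $\sqrt{c} \le \sqrt{\mathcal{R}_n} + \sqrt{\tfrac{2B}{3}\log(\tfrac1\delta)}$ to separate the $\sqrt{\mathcal{R}_n}$ and $\log(\tfrac1\delta)$ contributions) and dividing by $n$ reproduces the three advertised terms: the leading $\frac{\mathcal{R}_n}{n}$, the cross term of order $\sqrt{\Lip^2 \log(\tfrac1\delta)/\nu}\,\sqrt{\mathcal{R}_n}/n$, and the residual $\max\{16\Lip^2/\nu,\, 6B\}\log(\tfrac1\delta)/n$, with the constants absorbing the Freedman factors and the slack from the peeling union bound. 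The only genuinely delicate step is the peeling; the rest is the Jensen reduction, the strong-convexity variance bound, and routine solution of the quadratic.
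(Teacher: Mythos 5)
Your reconstruction is correct and follows essentially the same route as the source: the paper does not prove this statement itself but imports it as Corollary 5 of \cite{kakade2009generalization}, whose proof is exactly your chain of Jensen's inequality on $\bar{f}_n$, the martingale decomposition $\sum_t r_t \leq \mathcal{R}_n + \sum_t \xi_t$, the strong-convexity-plus-Lipschitz variance bound $\E[(\loss_{f_t}(Z)-\loss_{f^*}(Z))^2 \mid Z_1^{t-1}] \leq \frac{2\Lip^2}{\nu} r_t$ (their Lemma 1), Freedman's inequality with peeling over the random variance (the origin of the $4\log(n)\delta$ factor), and the solution of the resulting self-bounding quadratic. This is also precisely the blueprint the paper retraces in Section \ref{sec:online-to-batch}, where it substitutes the exp-concavity-based variance lemma for the strong-convexity one to obtain Corollary \ref{cor:online-to-batch}.
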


Under various assumptions, there are OCO algorithms that obtain worst-case regret (under all sequences $z_1, \ldots, z_n$) $\mathcal{R}_n = O(\log n)$. For instance, Online Gradient Descent \citep{hazan2007logarithmic} admits the regret bound $\mathcal{R}_n \leq \frac{G^2}{2 \nu} (1 + \log n)$, where $G$ is an upper bound on the gradient.

What if we relax strong convexity to exp-concavity? As we will see, it is possible to extend the analysis of \cite{kakade2009generalization} to $\eta$-exp-concave losses. Of course, such a regret-to-excess-risk bound conversion is useful only if we have online algorithms and regret bounds to start with. Indeed, at least two such algorithms and bounds exist, due to \cite{hazan2007logarithmic}:
\begin{itemize}[leftmargin=4.8mm,itemsep=2pt]
\item ONS, 
with $\mathcal{R}_n \leq 5 \left( \frac{1}{\eta} + G D \right) d \log n$, 
where $G$ is a bound on the gradient and $D$ is a bound on the diameter of the action space.
\item Exponentially Weighted Online Optimization (EWOO), 
with $\mathcal{R}_n \leq \frac{1}{\eta} d \left( 1 + \log(n + 1) \right)$. 
The better regret bound comes at the price of not being computationally efficient. 
EWOO \emph{can} be run in randomized polynomial time, but the regret bound then holds only in expectation (which is insufficient for an online-to-batch conversion).
\end{itemize}

We now show how to extend the analysis of \cite{kakade2009generalization} to exp-concave losses. While similar results can be obtained from the work of \cite{mahdavi2015lower} for the specific case of ONS, our analysis is agnostic of the base algorithm. 
A particular consequence is that our analysis also applies to EWOO, which, although highly impractical, offers a better regret bound. Moreover, our analysis applies to any future online learning algorithms which may have improved guarantees and computational complexities. 
The key insight is that exp-concavity implies a variance inequality similar to Lemma 1 of \cite{kakade2009generalization}, a pivotal result of that work that unlocks Freedman's inequality for martingales \citep{freedman1975tail}. Let $Z_1^t$ denote the sequence $Z_1, \ldots, Z_t$.

\begin{lemma}[Conditional variance control]
Define the Martingale difference sequence
\begin{align*}
\xi_t 
:= \E_Z \bigl[ \loss_{f_t}(Z) - \loss_{f^*}(Z) \bigr] 
     - \bigl( \loss_{f_t}(Z_t) - \loss_{f^*}(Z_t) \bigr) .
\end{align*}
\begin{flalign*}
\text{Then} && &&
\mathrm{Var} \left[ \xi_t \mid Z_1^{t-1} \right] 
\leq 4 \left( \frac{1}{\eta} + B \right) \E_Z \bigl[ \loss_{f_t}(Z) - \loss_{f^*}(Z) \bigr] . && && &&
\end{flalign*}
\end{lemma}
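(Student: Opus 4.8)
The plan is to exploit the fact that, conditionally on the past $Z_1^{t-1}$, the only randomness in $\xi_t$ comes from the fresh draw $Z_t$, and then to invoke the central-to-Bernstein machinery already in place. First I would note that since the learner's iterate $f_t$ depends only on $(Z_1, \ldots, Z_{t-1})$, it is $\sigma(Z_1^{t-1})$-measurable; conditionally on $Z_1^{t-1}$ it is therefore a \emph{fixed} element of $\F$. Moreover, because the sequence is i.i.d., $Z_t$ is independent of $Z_1^{t-1}$ and distributed according to $P$. Consequently the term $\E_Z[\loss_{f_t}(Z) - \loss_{f^*}(Z)]$ is a constant given $Z_1^{t-1}$, and the conditional variance of $\xi_t$ equals the conditional variance of the single random variable $X := \loss_{f_t}(Z_t) - \loss_{f^*}(Z_t)$, so that $\Var[\xi_t \mid Z_1^{t-1}] = \Var[X \mid Z_1^{t-1}] \leq \E[X^2 \mid Z_1^{t-1}]$.

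Next I would bring in the two structural facts about the exp-concave setting. Since $f_t$ is a fixed member of $\F$ given the past, Lemma \ref{lemma:exp-concavity-to-central} (the $\eta$-central condition) applied to this fixed hypothesis gives $\E_{Z \sim P}[e^{-\eta(\loss_{f_t}(Z) - \loss_{f^*}(Z))}] \leq 1$, i.e.\ $\E[e^{-\eta X} \mid Z_1^{t-1}] \leq 1$. The diameter assumption on the loss ensures $|\loss_f(z) - \loss_{f^*}(z)| \leq B$ for every $f$ and $z$, so $X$ takes values in $[-B, B]$. These are precisely the hypotheses of Lemma \ref{lemma:central-to-bernstein}, which I would apply conditionally on $Z_1^{t-1}$ to obtain $\E[X^2 \mid Z_1^{t-1}] \leq 4(1/\eta + B)\,\E[X \mid Z_1^{t-1}]$. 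Since $\E[X \mid Z_1^{t-1}] = \E_Z[\loss_{f_t}(Z) - \loss_{f^*}(Z)]$, chaining the two inequalities yields the claimed bound.

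There is no genuine obstacle here: the result is essentially a conditional repackaging of Lemma \ref{lemma:central-to-bernstein}. The only point requiring care is the measurability and independence bookkeeping --- making precise that conditioning on $Z_1^{t-1}$ both freezes $f_t$ into a deterministic element of $\F$ (so that the central condition is legitimately applicable to it) and leaves $Z_t$ as a fresh $P$-distributed draw (so that the conditional expectations coincide with integrals against $P$). Once this is in place the bound on the conditional variance is immediate, and it is exactly the exp-concave analogue of Lemma~1 of \cite{kakade2009generalization} that feeds the variance term into Freedman's inequality.
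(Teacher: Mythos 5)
Your proposal is correct and follows essentially the same route as the paper: reduce the conditional variance to the variance of $\loss_{f_t}(Z) - \loss_{f^*}(Z)$ with $f_t$ frozen by the conditioning, then invoke Lemma \ref{lemma:exp-concavity-to-central} and Lemma \ref{lemma:central-to-bernstein}. The only difference is that you spell out the measurability bookkeeping and the step $\Var[X] \leq \E[X^2]$ that the paper leaves implicit.
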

\begin{proof}
Observe that 
$\mathrm{Var} \left[ \xi_t \mid Z_1^{t-1} \right] 
= \mathrm{Var} \bigl[ \loss_{f_t}(Z_t) - \loss_{f^*}(Z_t) \mid Z_1^{t-1} \bigr]$. 
Treating the sequence $Z_1^{t-1}$ as fixed and also treating $f_t$ as a fixed parameter $f$, the above conditional variance equals 
$\mathrm{Var} \bigl[ \loss_f(Z) - \loss_{f^*}(Z) \bigr]$, 
where the randomness lies entirely in $Z \sim P$. 
Then, Lemma \ref{lemma:central-to-bernstein} implies that
$\mathrm{Var} \bigl[ \loss_f(Z) - \loss_{f^*}(Z) \bigr] 
\leq 4 \left( \frac{1}{\eta} + B \right) \E \left[ \loss_f(Z) - \loss_{f^*}(Z) \right]$.
\end{proof}

The next corollary is from a retrace of the proof of Theorem 2 of \cite{kakade2009generalization}.
\begin{corollary} \label{cor:online-to-batch}
For all $z \in \Z$, let $\loss(\cdot, z)$ be bounded by $B$ and $\eta$-exp-concave with respect to the action $f \in \F$. Then with probability at least $1 - \delta$, for any $n \geq 3$, the excess risk of $\bar{f}_n$ is at most
\begin{align*}
\frac{\mathcal{R}_n}{n} 
+ 4 \sqrt{ \left( \frac{1}{\eta} + B \right) \log \frac{4 \log n}{\delta}} \cdot \frac{\sqrt{\mathcal{R}_n}}{n} 
+ 16 \left( \frac{1}{\eta} + B \right) \frac{\log \frac{4 \log n}{\delta}}{n} .
\end{align*}
\end{corollary}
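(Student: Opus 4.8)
The plan is to retrace the proof of Theorem~2 of \cite{kakade2009generalization}, substituting the preceding lemma (conditional variance control), which is the exp-concave analogue of their Lemma~1, wherever their strong-convexity-based variance inequality is invoked. First I would reduce from the averaged iterate $\bar{f}_n$ to the individual iterates: since each $\loss(\cdot, z)$ is $\eta$-exp-concave and hence convex, Jensen's inequality gives $\E_Z [ \loss_{\bar{f}_n}(Z) - \loss_{f^*}(Z) ] \leq \frac{1}{n} \sum_{t=1}^n \E_Z [ \loss_{f_t}(Z) - \loss_{f^*}(Z) ]$, so it suffices to control the cumulative excess risk $S := \sum_{t=1}^n \E_Z [ \loss_{f_t}(Z) - \loss_{f^*}(Z) ]$ and divide by $n$ at the end. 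Next I would split $S$ using the martingale difference sequence $\xi_t$ from the lemma, writing $S = \sum_{t=1}^n ( \loss_{f_t}(Z_t) - \loss_{f^*}(Z_t) ) + \sum_{t=1}^n \xi_t$. The first sum is the empirical excess loss of the online iterates against the fixed comparator $f^*$, which is at most the regret, $\sum_{t=1}^n ( \loss_{f_t}(Z_t) - \loss_{f^*}(Z_t) ) \leq \mathcal{R}_n$, because $\sum_t \loss_{f^*}(Z_t) \geq \inf_{f \in \F} \sum_t \loss_f(Z_t)$. Thus $S \leq \mathcal{R}_n + \sum_t \xi_t$, and the whole task reduces to a high probability upper bound on the martingale sum.

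To bound $\sum_t \xi_t$ I would invoke Freedman's inequality for martingales \citep{freedman1975tail}. Each $\xi_t$ is a bounded martingale difference with respect to the filtration generated by $Z_1^{t-1}$ (the diameter assumption gives $|\xi_t| \leq 2B$), and the conditional variance control lemma bounds the cumulative conditional variance by $\sum_t \mathrm{Var}[ \xi_t \mid Z_1^{t-1} ] \leq 4 \left( \frac{1}{\eta} + B \right) S$. Here lies the \emph{main obstacle}: this variance budget is itself proportional to $S$, the very quantity being bounded, so Freedman's inequality, which requires a deterministic cap on the cumulative variance, cannot be applied directly. The resolution, exactly as in \cite{kakade2009generalization}, is a peeling (stratification) argument that union-bounds Freedman's tail over geometrically spaced candidate values of the cumulative variance; since the variance ranges over an interval of length $O(n)$, only $O(\log n)$ slices are needed, which is the source of the $\log n$ factor appearing inside the logarithm of the final bound.

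The peeling step yields, with probability at least $1 - 4 \log(n)\, \delta'$, a bound on $\sum_t \xi_t$ whose two terms are of the form $\sqrt{( \frac{1}{\eta} + B )\, S\, \log(1/\delta')}$ and $( \frac{1}{\eta} + B ) \log(1/\delta')$. Combining with $S \leq \mathcal{R}_n + \sum_t \xi_t$ produces a self-referential inequality that is quadratic in $\sqrt{S}$; solving it and bounding $S$ by $\mathcal{R}_n$ to leading order inside the cross term gives $S \leq \mathcal{R}_n + 4 \sqrt{( \frac{1}{\eta} + B )\, \mathcal{R}_n\, \log(1/\delta')} + 16 ( \frac{1}{\eta} + B ) \log(1/\delta')$. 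Dividing by $n$ and rescaling $\delta' = \delta / (4 \log n)$, so that the failure probability $4 \log(n)\, \delta'$ becomes $\delta$ and $\log(1/\delta')$ becomes $\log \frac{4 \log n}{\delta}$, yields precisely the three stated terms. I expect the only genuinely delicate part beyond faithful bookkeeping to be tracking the new constants $4 ( \frac{1}{\eta} + B )$ through the quadratic solve and the peeling bound so that the clean coefficients $4$ and $16$ emerge; the conceptual work has already been done by the conditional variance control lemma, which packages exp-concavity into exactly the variance inequality that Kakade and Tewari's machinery consumes, and the range term $B \log(1/\delta')$ from $|\xi_t| \leq 2B$ is dominated by $( \frac{1}{\eta} + B ) \log(1/\delta')$ and absorbed.
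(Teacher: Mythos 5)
Your proposal is correct and follows essentially the same route as the paper, which itself proves this corollary by retracing the proof of Theorem 2 of \cite{kakade2009generalization} with the conditional variance control lemma substituted for their strong-convexity-based Lemma 1. The decomposition into regret plus a martingale sum, the use of Freedman's inequality with the $O(\log n)$ peeling over variance levels, and the resolution of the self-referential quadratic inequality are exactly the steps the paper relies on.
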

In particular, an online-to-batch conversion of EWOO yields excess risk of order
\begin{align*}
\frac{d \log n}{\eta n} 
+ \frac{\sqrt{d \log n}}{n} \left(
      \sqrt{\frac{(\log \log n) B}{\eta}} 
      + \sqrt{\left( \frac{1}{\eta^2} + \frac{B}{\eta} \right) \log \frac{1}{\delta}} \right) 
+ \frac{(\log \log n) B + B \log \frac{1}{\delta}}{n} .
\end{align*}
By proceeding similarly one can get a guarantee for ONS, under the additional assumptions that $\F$ has bounded diameter and that, for all $z \in \Z$, the gradient $\nabla_f \loss(f, z)$ has bounded norm.

\paragraph{Obtaining $\boldsymbol{o(\log n)}$ excess risk.}
The worst-case regret bounds in this online setting have a $\log n$ factor, but when the environment is stochastic and the distribution satisfies some notion of easiness the actual regret can be $o(\log n)$. In such situations the excess risk similarly can be $o(\log n)$ because our excess risk bounds depend not on worst-case regret bounds but rather the actual regret. We briefly explore one scenario where such improvement is possible. 
Suppose that the loss is also $\beta$-smooth; then, in situations when the cumulative loss of $f^*$ is small, the analysis of \citet[Theorem 1]{orabona2012beyond} for ONS yields a more favorable regret bound: they show a regret bound of order $\log \bigl( 1 + \sum_{t=1}^n \loss_{f^*}(Z_t) \bigr)$. As a simple example, consider the case when the problem is realizable in the sense that $\loss_{f^*}(Z) = 0$ almost surely. Then the regret bound is constant and the rate with respect to $n$ for the excess risk in Corollary \ref{cor:online-to-batch} is $\frac{\log \log n}{n}$.

\section{Model selection aggregation}
\label{sec:ms-aggregate}

In the model selection aggregation problem for exp-concave losses, we are given a countable class $\F$ of functions from an input space $\X$ to an output space $\Y$ and a loss $\loss \colon \Y \times \Y \to \reals$; for each $y \in \Y$, the mapping $\hat{y} \mapsto \loss(y, \hat{y})$ is $\eta$-exp-concave. The loss is a supervised loss, as in supervised classification and regression, unlike the more general loss functions used in the rest of the paper which fit into Vapnik's general setting of the learning problem \citep{vapnik1995nature}. 
The random points $Z \sim P$ now decompose into an input-output pair $Z = (X, Y) \in \Z = \X \times \Y$. 
We often use the notation $\loss_f(Z) := \loss(Y, f(X))$.
The goal is the same as in the stochastic exp-concave optimization problem, but now $\F$ fails to be convex (and the exp-concavity assumption slightly differs).

After \cite{audibert2008progressive} showed that the progressive mixture rule cannot obtain fast rates with high probability, several works developed methods that departed from progressive mixture rules and gravitated instead toward ERM-style rules, starting with the empirical star algorithm of
\cite{audibert2008progressive} and a subsequent method of \cite{lecue2009aggregation} which runs ERM over the convex hull of a data-dependent subclass. \cite{lecue2014optimal} extended these results to take into account a prior on the class using their $Q$-aggregation procedure. All the methods require Lipschitz continuity of the loss\footnote{\cite{audibert2008progressive} only proved the case of bounded squared loss with a suggestion for how to handle the case of exp-concave losses; because of the techniques used, it is likely that Lipschitz continuity would come into play.} and are for finite classes, although we believe that $Q$-aggregation combined with a suitable prior extends to countable classes. 
In this section, we present an algorithm that carefully composes exponential weights-type algorithms and still obtains a fast rate with high probability for the model selection aggregation problem. One incarnation can do so with the fast rate of $O(\log |\F| / n)$ for finite $|\F|$, by relying on Boosted ERM. Another, ``pure'' version is based on exponential weights-type procedures alone, can get a rate of $O(\log |\F| / n + \log n / n)$ with no explicit dependence on the Lipschitz continuity of the loss. To our knowledge, this is the first fast rate high probability bound for model selection aggregation that lacks explicit dependence on the Lipschitz constant of the loss. Both results hold more generally, allowing for countable classes, taking into account a prior distribution $\pi$ over $\F$, and providing a quantile-like improvement when there is a low quantile with close to optimal risk.

Since $\F$ is countable and hence not convex, algorithms for stochastic exp-concave optimization do not directly apply. 
Our approach is to apply stochastic exp-concave optimization to the convex hull of a certain small cardinality and data-dependent subset of $\F$. The first phase of obtaining this subset makes use of the progressive mixture rule. We offer two variants for the second phase: \textsc{PM-EWOO} (Algorithm \ref{alg:pm-ewoo}) and \textsc{PM-CB} (Algorithm \ref{alg:pm-cb}). In the algorithms, $\alg^{\mathsf{pm}}$ and $\alg^{\mathsf{ew}}$ are online-to-batch conversions of the progressive mixture rule and EWOO respectively, $\alg^{\mathsf{cb}}$ is \textsc{ConfidenceBoost}, and $\alg^{\mathsf{erm}}$ is ERM.

Our interest in \textsc{PM-EWOO} is two-fold: 
  \emph{(i)} it is a ``purely'' exponential weights type method in that it is based only on the progressive mixture rule and EWOO;
  \emph{(ii)} it does not require any Lipschitz assumption on the loss function, unlike all previous work.
\begin{theorem} \label{thm:pm-ewoo}
Let $\F$ be a countable and $\pi$ a prior distribution over $\F$. Assume that for each $y$ the loss $\loss \colon \hat{y} \mapsto \loss(y, \hat{y})$ is $\eta$-exp-concave. Further assume that $\sup_{f,f' \in \F} |\loss(y, f(x)) - \loss(y, f'(x))| \leq B$ for all $(x, y)$ in the support of $P$. 
Then with probability at least $1 - \delta$, \textsc{PM-EWOO} run with $K = \lceil \log(2/\delta) \rceil$, $\nalg = n / (2 K)$ and $\nboost = n / 2$ learns a hypothesis $\hat{f}$ satisfying
\begin{align*}
\E_{Z \sim P} \left[ \loss_{\hat{f}}(Z) - \loss_{f^*}(Z) \right] 
\leq e \cdot \BR{\frac{n}{2 \lceil \log \frac{2}{\delta} \rceil}} 
       + \theta_{\textsc{ew}}(\delta, n) ,
\end{align*}
\begin{flalign*}
\text{with} && && \theta_{\textsc{ew}}(\delta, n) = 
O \left( 
                     \frac{\sqrt{B} \left( \log \frac{1}{\delta} + \sqrt{\log \frac{1}{\delta}} \log n \right)}
                             {\eta n} 
                    + \frac{B \log \frac{\log n}{\delta})}{n} 
                \right) . && && &&
\end{flalign*}
\end{theorem}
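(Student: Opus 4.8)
The plan is to reuse the two-phase template of \textsc{ConfidenceBoost} (Theorem~\ref{thm:whp-bound}), but with two substitutions: the generic base learner $\alg$ in the first phase becomes the online-to-batch progressive mixture rule $\alg^{\mathsf{pm}}$, and the final ERM step becomes the online-to-batch EWOO conversion $\alg^{\mathsf{ew}}$ run over a data-dependent convex hull. The two summands of the claimed bound are exactly these two phases: $e \cdot \BR{\nalg}$ is the confidence-boosted risk of the progressive mixture rule, and $\theta_{\textsc{ew}}(\delta, n)$ is the price of aggregating the resulting candidates with EWOO. I would assemble the final bound from a triangle inequality for excess risks together with a union bound over the two phases, each allotted failure probability $\delta/2$.

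For Phase 1 I would argue exactly as in the discussion preceding Theorem~\ref{thm:whp-bound}. Running $\alg^{\mathsf{pm}}$ on $K = \lceil \log(2/\delta) \rceil$ independent subsamples of size $\nalg = n/(2K)$ produces estimators $\hat f_1, \ldots, \hat f_K$, each a Ces\`aro-averaged pseudo-Bayesian mixture and hence a point of $\conv(\F)$. Because the progressive mixture rule satisfies the in-expectation, quantile-like excess risk bound $\BR{\nalg}$ (improving with the prior mass $\pi$ places on nearly optimal hypotheses), Markov's inequality gives $\Pr(\E_Z[\loss_{\hat f_j}(Z) - \loss_{f^*}(Z)] \geq e \cdot \BR{\nalg}) \leq 1/e$ for each $j$, and independence upgrades this to: with probability at least $1 - e^{-K} \geq 1 - \delta/2$ some candidate $\hat f_{j^*}$ satisfies $\E_Z[\loss_{\hat f_{j^*}} - \loss_{f^*}] \leq e \cdot \BR{\nalg}$, which is the first summand.

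For Phase 2 I would run EWOO over $\conv(\F_K)$, where $\F_K = \{\hat f_1, \ldots, \hat f_K\}$, on the fresh sample $\Zboost$ of size $\nboost = n/2$. The structural point that makes this legitimate is that a point of $\conv(\F_K)$ is indexed by weights in the $(K-1)$-dimensional simplex, its prediction is affine in those weights, and $\eta$-exp-concavity is preserved under affine reparametrization; thus the induced loss is $\eta$-exp-concave over a domain of dimension $K - 1 = O(\log(1/\delta))$, and EWOO attains logarithmic-in-$n$ regret $\mathcal{R}_{\nboost}$ with no dependence on any Lipschitz constant. Feeding this regret into the exp-concave online-to-batch guarantee of Corollary~\ref{cor:online-to-batch} (itself Lipschitz-free) controls, with probability $\geq 1 - \delta/2$, the excess risk of the averaged iterate relative to the risk minimizer of $\conv(\F_K)$; since $\hat f_{j^*} \in \conv(\F_K)$ that minimizer is at least as good, so the same bound controls the excess risk relative to $\hat f_{j^*}$, and this quantity is bounded by $\theta_{\textsc{ew}}(\delta, n)$. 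One technical check here is that the boundedness hypothesis of Corollary~\ref{cor:online-to-batch} survives on the hull; this follows because $\eta$-exp-concavity forces $\loss(y, \cdot)$ to be convex, so losses on $\conv(\F_K)$ remain controlled in terms of $B$.

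Combining, on the intersection of the two good events --- probability at least $1 - \delta$ by a union bound --- the output $\hat f$ satisfies $\E_Z[\loss_{\hat f} - \loss_{f^*}] \leq \E_Z[\loss_{\hat f} - \loss_{\hat f_{j^*}}] + \E_Z[\loss_{\hat f_{j^*}} - \loss_{f^*}] \leq \theta_{\textsc{ew}}(\delta, n) + e \cdot \BR{\nalg}$, which is the claim. I expect the main obstacle to be the final bookkeeping step: substituting EWOO's logarithmic regret and $\nboost = n/2$ into the three-term expression of Corollary~\ref{cor:online-to-batch} and collapsing the resulting interplay of $\log n$ (from the regret), $\log(1/\delta)$ (from $K$ and from the deviation term), and $\log\log n$ (from the $4\log n$ inside that corollary's confidence level) into the stated form of $\theta_{\textsc{ew}}$, while tracking how the diameter $B$ enters the cross term. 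Making the $K$-dependence of the regret tight enough to match the claimed $\sqrt{\log(1/\delta)}$ scaling --- rather than a cruder $\log(1/\delta)$ --- is the most delicate part, and is where I would expect to need either the mixability structure of the aggregation step or a careful regime analysis.
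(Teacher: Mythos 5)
Your proposal follows essentially the same route as the paper's proof: Markov plus independence over $K$ runs of the progressive mixture rule to get one candidate achieving $e \cdot \BR{\nalg}$, then the reparametrization of $\conv(\{\hat f_1,\dots,\hat f_K\})$ as an $\eta$-exp-concave problem over the $(K-1)$-simplex so that Corollary \ref{cor:online-to-batch} applied to EWOO with $d = K = \lceil \log(2/\delta)\rceil$ yields $\theta_{\textsc{ew}}$. The only delicacy you flag at the end is benign --- the $\sqrt{\log(1/\delta)}$ factor falls out directly from the $\sqrt{\mathcal{R}_n}$ cross term with $\mathcal{R}_n \propto K$ --- and the one step you gloss over (the paper's rewrite via the $\gap(\rho^*,f^*)$ term to make the Phase-1 random variable nonnegative before invoking Markov) is handled identically in spirit.
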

Here, $\BR{n}$ is the 
\emph{$\eta$-generalized Expected Bayesian Redundancy}  \citep{takeuchi1998robustly,grunwald2012safe}, 
defined as
\begin{align*}
\inf_{\rho \in \Delta(\F)} \left\{ 
        \E_Z \left[ \E_{f \sim \rho} \left[ \loss_f(Z) \right] - \loss_{f^*}(Z) \right] 
        + \frac{D(\rho \pipes \pi)}{\eta (n + 1)}
\right\} ,
\end{align*}
for $D(\cdot \pipes \cdot)$ the KL-divergence. 
The bound can be rewritten as a quantile-like bound; for all $\rho \in \Delta(\F)$:
\begin{align*}
\E_{Z \sim P} \left[ \loss_{\hat{f}}(Z) - \E_{f \sim \rho} \left[ \loss_f(Z) \right] \right] 
\leq (e - 1) \gap(\rho, f^*) 
          + \frac{2 e \left\lceil \log \frac{2}{\delta} \right\rceil D(\rho \pipes \pi)}{\eta n} 
          + \theta_{\textsc{ew}}(\delta, n) ,
\end{align*}
where $\gap(\rho, f^*) := \E_Z \left[ \E_{f \sim \rho} \left[ \loss_f(Z) \right] - \loss_{f^*}(Z) \right]$. This bound enjoys a quantile-like improvement when $\gap(\rho, f^*)$ is small. For instance, if there is a set $\F'$ of large prior measure which has excess risk close to $f^*$, then Theorem \ref{thm:pm-ewoo} pays $\log (1/\pi(\F'))$ for the complexity; in contrast, Theorem A of \cite{lecue2014optimal} pays a higher complexity price of $\log (1/\pi(f^*))$.

\begin{figure}[t]
\begin{algorithm}[H]
  \SetAlgoNoLine
  \DontPrintSemicolon
  \KwIn{$\mathbf{Z}_1, \ldots, \mathbf{Z}_K \overset{iid}{\sim} P^{\nalg}$, $\Zboost \sim P^{\nboost}$}
  \lFor{$j = 1 \to K$}{$\hat{f}_j = \alg^{\mathsf{pm}}_\F(\mathbf{Z}_j$)}
  \Return $\alg^{\mathsf{ew}}_{\F_K}(\Zboost)$, with $\F_K = \conv(\{\hat{f}_1, \ldots, \hat{f}_K)\}$\;
  \caption{\label{alg:pm-ewoo} \textsc{PM-EWOO}}
\end{algorithm}
\end{figure}

Lastly, we provide a simpler bound by specializing to the case of $\rho$ concentrated entirely on $f^*$. Then
\begin{align*}
\E_{Z \sim P} \left[ \loss_{\hat{f}}(Z) - \loss_{f^*}(Z) \right] 
\leq \frac{2 e \left\lceil \log \frac{2}{\delta} \right\rceil \log \frac{1}{\pi(f^*)}}{\eta n}
       + \theta_{\textsc{ew}}(\delta, n) .
\end{align*}
Theorem \ref{thm:pm-ewoo} does not explicitly require Lipschitz continuity of the loss, but the rate is suboptimal due to the extra $\log n$ factor. 
The next result obtains the correct rate by using \textsc{ConfidenceBoost} for the second stage of the procedure.
\begin{theorem} \label{thm:pm-cb}
Take the assumptions of Theorem \ref{thm:pm-ewoo}, but instead assume that for each $y$ the loss $\loss \colon \hat{y} \mapsto \loss(y, \hat{y})$ is $\alpha$-strongly convex and $\Lip$-Lipschitz (so $(\alpha / \Lip^2)$-exp-concavity holds). 
Then with probability at least $1 - \delta$, \textsc{PM-CB} run with $K = \lceil \log(3/\delta) \rceil$, $\nalg = n / (4 K)$ and $\nboost = n / 2$ learns a hypothesis $\hat{f}$ satisfying
\begin{align*}
\E_{Z \sim P} \left[ \loss_{\hat{f}}(Z) - \loss_{f^*}(Z) \right] 
\leq e \cdot \BR{\frac{n}{4 \lceil \log \frac{3}{\delta} \rceil}} 
       + \theta_{\textsc{cb}}(\delta, n) ,
\end{align*}
\begin{flalign*}
\text{with} && \theta_{\textsc{cb}}(\delta, n) = O \left( \frac{\left(\log \frac{1}{\delta}\right)^2}{\eta n} + \frac{B \log \frac{1}{\delta}}{n} \right) . && && &&
\end{flalign*}
\end{theorem}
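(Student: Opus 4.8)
The plan is to mirror the proof of Theorem \ref{thm:pm-ewoo}, replacing the EWOO second stage of the procedure by \textsc{ConfidenceBoost} and invoking its high-probability guarantee (Theorem \ref{thm:whp-bound}, specialized as in Corollary \ref{cor:whp-bounds}) in place of the online-to-batch EWOO bound. The starting point is the decomposition of the excess risk relative to $f^*$ as
\begin{align*}
\E_Z[\loss_{\hat f}(Z) - \loss_{f^*}(Z)]
= \E_Z[\loss_{\hat f}(Z) - \loss_{f_K^*}(Z)]
 + \E_Z[\loss_{f_K^*}(Z) - \loss_{f^*}(Z)],
\end{align*}
where $f_K^* \in \F_K$ is the risk minimizer over $\F_K = \conv(\{\hat f_1, \ldots, \hat f_K\})$. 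I would control the first (estimation) term with \textsc{ConfidenceBoost}, producing $\theta_{\textsc{cb}}(\delta, n)$, and the second (approximation) term with the first-phase progressive mixture runs, producing $e \cdot \BR{\nalg}$ with $\nalg = n/(4\lceil\log(3/\delta)\rceil)$.

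For the approximation term I would argue exactly as in the first phase of \textsc{PM-EWOO}: each $\hat f_j = \alg^{\mathsf{pm}}_\F(\mathbf Z_j)$ is an online-to-batch progressive mixture estimator, which by construction satisfies the in-expectation bound $\E[\loss_{\hat f_j}(Z) - \loss_{f^*}(Z)] \leq \BR{\nalg}$ on a sample of size $\nalg$. Applying Markov's inequality together with independence across the $K = \lceil\log(3/\delta)\rceil$ runs --- the boosting-the-confidence step used before Theorem \ref{thm:whp-bound} --- yields an event \textsc{good} of probability at least $1 - e^{-K} \geq 1 - \delta/3$ on which some $\hat f_j$ has excess risk at most $e\cdot\BR{\nalg}$. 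Since every vertex $\hat f_j$ lies in $\F_K$, on \textsc{good} we obtain $\E[\loss_{f_K^*}(Z) - \loss_{f^*}(Z)] \leq e\cdot\BR{\nalg}$, which is the first term of the claimed bound.

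The crux is the estimation term. Here I would reparameterize $\F_K$ by its simplex weights: writing $\Phi(x) = (\hat f_1(x), \ldots, \hat f_K(x))$, every $f \in \F_K$ has prediction $f(x) = \langle w, \Phi(x)\rangle$ for some $w$ in the simplex in $\reals^K$. Because the prediction is affine in $w$ and $\hat y \mapsto \loss(y, \hat y)$ is $\alpha$-strongly convex and $\Lip$-Lipschitz, the induced problem on $\F_K$ is exactly an instance of the \cite{gonen2016tightening} setting in dimension $K$, with loss $\phi_y(\langle w, \Phi(x)\rangle)$ and $\eta = \alpha/\Lip^2$. Since $\F_K$ is convex and the loss is $\eta$-exp-concave on it, Lemmas \ref{lemma:exp-concavity-to-central} and \ref{lemma:central-to-bernstein} furnish a $(4(1/\eta + B), 1)$-Bernstein condition for $(P, \loss, \F_K)$ with comparator $f_K^*$, so the hypotheses of Corollary \ref{cor:whp-bounds} hold with $d = K$. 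Running $\alg^{\mathsf{cb}}$ on $\nboost = n/2$ fresh points at confidence $2\delta/3$ then gives, with probability at least $1 - 2\delta/3$,
\begin{align*}
\E_Z[\loss_{\hat f}(Z) - \loss_{f_K^*}(Z)]
= O\!\left(\frac{\log\frac{1}{\delta}}{n}\left(\frac{K}{\eta} + B\right)\right)
= O\!\left(\frac{(\log\frac{1}{\delta})^2}{\eta n} + \frac{B\log\frac{1}{\delta}}{n}\right)
= \theta_{\textsc{cb}}(\delta, n),
\end{align*}
using $K = O(\log(1/\delta))$. A union bound over the two failure events ($\delta/3 + 2\delta/3 = \delta$), combined with the decomposition above, completes the argument.

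I expect the main obstacle to be verifying that the hypotheses transfer cleanly to $\F_K$. Two points need care. First, the dimension of the exp-concave problem on $\F_K$ is $K = O(\log(1/\delta))$, and feeding this $K$ into the $O(d/(\eta n))$ base rate is precisely what yields the $(\log(1/\delta))^2$ term; one must confirm this is the dominant contribution. Second --- and this is the genuinely delicate part --- the effective diameter of the loss over the \emph{convex hull} $\F_K$ must remain $O(B)$, since the vertex diameter bound does not by itself control the loss at interior points (a convex loss can dip below all of its vertex values). I would use strong convexity together with Lipschitzness and boundedness to confine the predictions of the $\hat f_j$, hence all of $\F_K$, to a bounded interval over which the loss ranges within a constant multiple of $B$. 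The strong-convexity assumption itself (absent in Theorem \ref{thm:pm-ewoo}) is what licenses the \cite{gonen2016tightening} base learner inside \textsc{ConfidenceBoost}, which requires the loss to be strongly convex and Lipschitz in the prediction; this is exactly the trade made to remove the stray $\log n$ factor present in $\theta_{\textsc{ew}}$.
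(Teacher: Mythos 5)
Your proposal is correct and follows essentially the same route as the paper: a first-phase Markov/boosting-the-confidence argument over the $K$ progressive mixture runs to control the approximation term by $e \cdot \BR{\nalg}$, followed by the simplex reparameterization of $\conv(\{\hat f_1,\ldots,\hat f_K\})$ into a $K$-dimensional exp-concave problem in the \cite{gonen2016tightening} framework, to which Corollary \ref{cor:whp-bounds} applies with $d = K = O(\log(1/\delta))$. Your flagged concern about the loss diameter over the convex hull is a point the paper treats rather briskly (it simply asserts the loss remains bounded by $B$ on convex aggregates), so your extra care there is reasonable but does not change the argument.
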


\begin{figure}[t]
\begin{algorithm}[H]
  \SetAlgoNoLine
  \DontPrintSemicolon
  \KwIn{$\mathbf{Z}_1, \ldots, \mathbf{Z}_{2 K} \overset{iid}{\sim} P^{\nalg}$, $\Zboost \sim P^{\nboost}$}
  \lFor{$j = 1 \to K$}{$\hat{f}_j = \alg^{\mathsf{pm}}_\F(\mathbf{Z}_j$)}
   \Return $\alg^{\mathsf{cb}}(\mathbf{Z}_{k + 1}, \ldots, \mathbf{Z}_{2 k}, \Zboost, \alg^{\mathsf{erm}}_{\F_k})$, with $\F_k = \conv(\{\hat{f}_1, \ldots, \hat{f}_k)\}$\;
  \caption{\label{alg:pm-cb} \textsc{PM-CB}}
\end{algorithm}
\end{figure}

The proofs of Theorems \ref{thm:pm-ewoo} and \ref{thm:pm-cb} are nearly identical and left to the appendix. 
We sketch a proof here, as it uses a novel reduction of the second phase to a low-dimensional stochastic exp-concave optimization problem. For simplicity, we restrict to the case of finite $\F$, uniform prior $\pi$, and competing with $f^*$. 
A na\"ive approach is to run a stochastic exp-concave optimization method on the convex hull of $\F$, but this suffers an excess risk bound scaling as $|\F|$ rather than $\log |\F|$. 
We instead start with an initial procedure that drastically reduces the set of candidates to a set of $O(\log(1/\delta)$. To this end, note that an online-to-batch conversion of the progressive mixture rule run on $n$ samples obtains expected excess risk at most $\log |\F| / (\eta (n + 1))$. Hence, $K$ independent runs yield a hypothesis with the same bound inflated by a factor $e$ with probability at least $1 - e^{-K}$ (we assume that this high probability event holds hereafter). 
At this point, it seems that we have replaced the original problem with an isomorphic one, as we do not know which $j \in [K]$ yields the desired candidate $\hat{f}_j$, and the corresponding subclass is still clearly non-convex. However, by taking the convex hull of this set of $K$ predictors and reparameterizing the problem, we arrive at a stochastic $\eta$-exp-concave optimization problem over the $K$-dimensional simplex; the best predictor in the convex hull clearly at least as good as the best one in $\F$. Thus, our analyses of EWOO and \textsc{ConfidenceBoost} apply and the results follow.

\section{Discussion and Open Problems}
\label{sec:discussion}

We presented the first high probability $O(d / n)$ excess risk bound for exp-concave statistical learning. 
The key to proving this bound was the connection between exp-concavity and the central condition, a connection which suggests that exp-concavity implies a \emph{low noise} condition. 
Here, low noise can be interpreted either in terms of 
    the central condition, by the exponential decay of the negative tail of the excess loss random variables, 
    or in terms of the Bernstein condition, by the variance of the excess loss of a hypothesis $f$ being controlled by its excess risk. 
All our results for stochastic exp-concave optimization were based on this low noise interpretation of exp-concavity. In contrast, 
The previous in-expectation $O(d/n)$ results of \cite{koren2015fast} and \cite{gonen2016tightening} used the geometric/convexity-interpretation of exp-concavity, which we further boosted to high probability results using the low noise interpretation. It would be interesting to get a high probability $O(d/n)$ result that proceeds purely from a low noise interpretation or purely from a geometric/convexity one.

Many results flowing from algorithmic stability often only yield in-expectation bounds, with high probability bounds stemming either from \emph{(i)}  
a posthoc confidence boosting procedure --- typically involving Hoeffding's inequality, which ``slows down'' fast rate results; 
or \emph{(ii)} quite strong stability notions --- e.g.~uniform stability allows one to apply McDiarmid's inequality to a single run of the algorithm \citep{bousquet2002stability}. 
Is it a limitation of algorithmic stability techniques that high probability $O(d/n)$ fast rates seem to be out of reach without a posthoc confidence boosting procedure, or are we simply missing the right perspective? One reason to avoid a confidence boosting procedure is that the resulting bounds suffer from a multiplicative $\log(1/\delta)$ factor rather than the lighter effect of an additive $\log(1/\delta)$ factor in bounds like Theorem \ref{thm:erm-whp-bound}.  
As we mentioned earlier, we conjecture that the basic ERM method obtains a high probability $O(d/n)$ rate, and a potential path to show this rate would be to control a localized complexity as done by \cite{sridharan2009fast} but using a more involved argument based on exp-concavity rather than strong convexity.

We also developed high probability quantile-like risk bounds for model selection aggregation, one with an optimal rate and another with a slightly suboptimal rate but no explicit dependence on the Lipschitz continuity of the loss. However, our bound form is not yet a full quantile-type bound; it degrades when the \textsc{gap} term is large, while the bound of \cite{lecue2014optimal} does not have this problem. Yet, our bound provides an improvement when there is a neighborhood around $f^*$ with large prior mass, which the bound of \citeauthor{lecue2014optimal} cannot do. It is an open problem to get a bound with the best of both worlds.


\bibliography{stochmix_confidence_boost}

\appendix

\section{Proofs for Stochastic Exp-Concave Optimization}
\label{sec:proofs}

\begin{proof}[of Lemma \ref{lemma:exp-concavity-to-central}]
The exp-concavity of $f \mapsto \loss(f, z)$ for each $z \in \Z$ implies that, for all $z \in \Z$ and all distributions $Q$ over $\F$:
\begin{align*}
\E_{f \sim Q} \left[ e^{-\eta \loss(f, z)} \right] \leq 
e^{-\eta \loss(\E_{f \sim Q} [ f ], z)} 
\quad \Longleftrightarrow \quad 
\loss(\E_{f \sim Q} [ f ], z) 
\leq -\frac{1}{\eta} \log \E_{f \sim Q} \left[ e^{-\eta \loss(f, z)} \right] .
\end{align*}
It therefore holds that for all distributions $P$ over $\Z$, for all distributions $Q$ over $\F$, there exists (from convexity of $\F$) $f^* = \E_{f \sim Q} [ f ] \in \F$ satisfying
\begin{align*}
\E_{Z \sim P} [ \loss(f^*, Z) ] 
\leq \E_{Z \sim P} \left[ -\frac{1}{\eta} \log \E_{f \sim Q} \left[ e^{-\eta \loss(f, Z)} \right] \right] . \label{eqn:stoch-mix}
\end{align*}
This condition is equivalent to \emph{stochastic mixability} as well as the \emph{pseudoprobability convexity (PPC) condition}, both defined by \Citet{vanerven2015fast}. To be precise, for stochastic mixability, in Definition 4.1 of \Citet{vanerven2015fast}, take their $\F_d$ and $\F$ both equal to our $\F$, their $\mathcal{P}$ equal to $\{P\}$, and $\psi(f) = f^*$; then strong stochastic mixability holds. Likewise, for the PPC condition, in Definition 3.2 of \Citet{vanerven2015fast} take the same settings but instead $\phi(f) = f^*$; then the strong PPC condition holds. Now, Theorem 3.10 of \Citet{vanerven2015fast} states that the PPC condition implies the (strong) central condition.
\end{proof}

\begin{proof}[of Theorem \ref{thm:erm-whp-bound}]
First, from Lemma \ref{lemma:exp-concavity-to-central}, the convexity of $\F$ together with $\eta$-exp-concavity implies that $(P, \loss, \F)$ satisfies the $\eta$-central condition.

The remainder of the proof is a drastic simplification of the proof of Theorem 7 of \cite{mehta2014stochastic}. Technically, Theorem 7 of that works applies directly, but one can get substantially smaller constants by avoiding much of the technical machinery needed there to handle VC-type classes (e.g.~symmetrization, chaining, Talagrand's inequality).

Denote by $\xsloss := \loss_f - \loss_{f^*}$ the excess loss with respect to comparator $f^*$. 
Our goal is to show that, with high probability, ERM does not select any function $f \in \F$ whose excess risk $\E [ \xsloss ]$ is larger than $\frac{a}{n}$ for some constant $a$. Clearly, with probability 1 ERM will never select any function for which both $\xsloss \geq 0$ almost surely and with some positive probability $\xsloss > 0$; we call these functions the empirically inadmissible functions. For any $\gamma_n > 0$, let $\F_{\succeq \gamma_n}$ be the subclass formed by starting with $\F$, retaining only functions whose excess risk is at least $\gamma_n$, and further removing the empirically inadmissible functions.

Our goal now may be expressed equivalently as showing that, with high probability, ERM does not select any function $f \in \F_{\succeq \gamma_n}$ where $\gamma_n = \frac{a}{n}$ and $a > 1$ is some constant to be determined later. 
Let $\F_{\succeq \gamma_n,\varepsilon}$ be an optimal proper $(\varepsilon / \Lip)$-cover for $\F_{\succeq \gamma_n}$ in the $\ell_2$ norm. From the Lipschitz property of the loss it follows that this cover induces an $\varepsilon$-cover in sup norm over the loss-composed function class $\left\{ \loss_f : f \in \F_{\succeq \gamma_n} \right\}$. 
Observe that an $\varepsilon$-cover of $\F_{\succeq \gamma_n}$ in the $\ell_2$ norm has cardinality at most $(4 R / \varepsilon)^d$ \cite[equation 1.1.10]{carl1990entropy}, 
and the cardinality of an optimal \emph{proper} $\varepsilon$-cover is at most the cardinality of an optimal $(\varepsilon / 2)$-cover. \cite[Lemma 2.1]{vidyasagar2002learning}. 
It hence follows that $|\F_{\succeq \gamma_n,\varepsilon}| \leq \left( \frac{8 \Lip R}{\varepsilon} \right)^d$. 

Let us consider some fixed $f \in \F_{\succeq \gamma_n,\varepsilon}$. Since we removed the empirical inadmissible functions, there exists some $\eta_f \geq \eta$ for which $\E [ e^{-\eta_f \xsloss} ] = 1$. 
Theorem 3 and Lemma 4, both from \cite{mehta2014stochastic}, imply that
\begin{align*}
\log \E_{Z \sim P} \left[ e^{-(\eta_f / 2) \xsloss} \right] \leq -\frac{0.18 \eta_f a}{(B \eta_f \opmax 1) n} .
\end{align*}
Applying Theorem 1 of \cite{mehta2014stochastic} with $t = \frac{a}{2n}$ and the $\eta$ in that theorem set to $\eta_f / 2$ yields:
\begin{align*}
\Pr \left( \frac{1}{n} \sum_{j=1}^n \xsloss(Z_j) \leq \frac{a}{2n} \right) 
\leq \exp \left( -0.18 \frac{\eta_f}{B \eta_f \opmax 1} a + \frac{a \eta_f}{4 n} \right) .
\end{align*}
Taking a union bound over $\F_{\succeq \gamma_n,\varepsilon}$ and using $\eta \leq \eta_f$ for all $f \in \F_{\succeq \gamma_n,\varepsilon}$, we have that
\begin{align*}
\Pr \left( \exists f \in \F_{\succeq \gamma_n,\varepsilon} : 
    \frac{1}{n} \sum_{j=1}^n \xsloss(Z_j) \leq \frac{a}{2n} \right) 
\leq \left( \frac{8 \Lip R}{\varepsilon} \right)^d 
                 \exp \left( -0.18 \frac{\eta}{B \eta \opmax 1} a + \frac{a \eta}{4 n} \right) .
\end{align*}

Setting $\varepsilon = \frac{1}{2 n}$ and taking $n \geq 5$, from inversion it follows that with probability at least $1 - \delta$, for all $f \in \F_{\succeq \gamma_n, \varepsilon}$, we have $\frac{1}{n} \sum_{j=1}^n \xsloss(Z_j) \leq \frac{a}{2n}$, where
\begin{align*}
a = 8 \left( B \opmax \frac{1}{\eta} \right) \left( d \log(16 \Lip R n) + \log \frac{1}{\delta} \right) .
\end{align*}
Now, since $\sup_{f \in \F_{\succeq \gamma_n}} \min_{f_\varepsilon \in \F_{\succeq \gamma_n,\varepsilon}} \|\loss_f - \loss_{f_\varepsilon}\|_\infty \leq \frac{1}{2 n}$, and increasing $a$ by 1 to guarantee that $a > 1$, with probability at least $1 - \delta$, for all $f \in \F_{\succeq \gamma_n}$, we have $\frac{1}{n} \sum_{j=1}^n \xsloss(Z_j) > 0$. 
\end{proof}

\begin{proof}[of Lemma \ref{lemma:central-to-bernstein}]
The main tool we use is part 2 of Theorem 5.4 of \Citet{vanerven2015fast}. First, as per the proof of Lemma \ref{lemma:exp-concavity-to-central}, note that the central condition as defined in the present work is equivalent to the strong PPC condition of \Citet{vanerven2015fast}. We actually can improve that result due to our easier setting because we may take their function $v$ to be the constant function identically equal to $\eta$. Consequently, in equation (70) of \Citet{vanerven2015fast}, we may take $\varepsilon = 0$, improving their constant $c_2$ by a factor of 3; moreover, their result actually holds for the second moment, not just the variance, yielding:
\begin{align} \label{eqn:ugly-bernstein}
\E [ X^2 ] 
\leq \frac{2}{\eta \kappa(-2 \eta B)} \E [ X ] ,
\end{align}
where $\kappa(x) = \frac{e^x - x - 1}{x^2}$. 

We now study the function 
\begin{align*}
x \mapsto \frac{1}{\kappa(-x)} = \frac{x^2}{e^{-x} + x - 1} .
\end{align*}
We claim that for all $x \geq 0$:
\begin{align*}
\frac{x^2}{e^{-x} + x - 1} \leq 2 + x .
\end{align*}
L'H\^opital's rule implies that the inequality holds for $x = 0$, and so it remains to consider the case of $x > 0$. 

First, observe that the denominator is nonnegative, and so we may rewrite this inequality as
\begin{align*}
x^2 \leq (2 + x) (e^{-x} + x - 1) ,
\end{align*}
which simplifies to
\begin{align*}
0 \leq 2 e^{-x} + x + x e^{-x} - 2 
\qquad \Leftrightarrow \qquad 
2 (1 - e^{-x}) \leq x (1 + e^{-x}) .
\end{align*}

Therefore, we just need to show that, for all $x > 0$,
\begin{align*}
\frac{2}{x} 
\leq \frac{1 + e^{-x}}{1 - e^{-x}} 
= \frac{e^{x/2} + e^{-x/2}}{e^{x/2} - e^{-x/2}}
= \coth(x/2) ,
\end{align*}
which is equivalent to showing that for all $x > 0$,
\begin{align*}
\tanh(x) \leq x .
\end{align*}
But this indeed holds, since
\begin{align*}
\tanh(x) 
= \frac{e^x - e^{-x}}{e^x + e^{-x}} 
&= \frac{2 (x + \frac{x^3}{3!} + \frac{x^5}{5!} + \ldots)}
              {2 (1 + \frac{x^2}{2!} + \frac{x^4}{4!} + \ldots)} \\
&= x \cdot 
     \frac{1 + \frac{x^2}{3!} + \frac{x^4}{5!} + \ldots}
             {1 + \frac{x^2}{2!} + \frac{x^4}{4!} + \ldots} \\
&\leq x .
\end{align*}
The desired inequality is now established.

Returning to \eqref{eqn:ugly-bernstein}, we have
\begin{align*}
\E [ X^2 ] 
\leq \frac{2}{\eta} (2 + 2 \eta B) \E [ X ] 
\leq 4 \left( \frac{1}{\eta} + B \right) \E [ X ] .
\end{align*}
\end{proof}

\begin{proof}[of Lemma \ref{lemma:erm-bernstein}]
The following simple version of Bernstein's inequality will suffice for our analysis. Let $X_1, \ldots, X_n$ be independent random variables satisfying $X_j \geq B$ almost surely.
Then
\begin{align*}
\Pr \left( \frac{1}{n} \sum_{j=1}^n X_j - \E [ X ] \geq t \right) 
\leq 
  \exp \left( 
      -\frac{n t^2}{2 \left( \E \left[ \frac{1}{n} \sum_{j=1}^n X^2 \right] + \frac{B t}{3} \right)}
  \right) .
\end{align*}

Denote by $\xsloss := \loss_f - \loss_{f_1}$ the excess loss with respect to comparator $f_1$. 
Fix some $f \in \G' \setminus \{f_1\}$, take $X = -\xsloss$, and set $t = \E [ \xsloss ]$, yielding:
\begin{align*}
\Pr \left( \frac{1}{n} \sum_{j=1}^n \xsloss(Z_j) \leq 0 \right) 
&\leq 
  \exp \left( 
      -\frac{n \E [ \xsloss ]^2}{2 ( \E [ \xsloss^2 ] + \frac{1}{3} B \E [ \xsloss ] )}
  \right) \\
&\leq
  \exp \left( 
      -\frac{n \E [ \xsloss ]^2}{2 ( C \E [ \xsloss ]^q + \frac{1}{3} B \E [ \xsloss ] )}
  \right) \\
&=
  \exp \left( 
      -\frac{n \E [ \xsloss ]^{2 - q}}{2 \left( C + \frac{1}{3} B \E [ \xsloss ]^{1 - q} \right)}
  \right) \\
&\leq
  \exp \left( 
      -\frac{n \E [ \xsloss ]^{2 - q}}{2 \left( C + \frac{1}{3} B^{2 - q} \right)}
  \right) .
\end{align*}

Therefore, if
\begin{align} \label{eqn:bernstein-level}
\E [ \xsloss ] \geq \left( \frac{2 \left( C + \frac{B^{2 - q}}{3} \right) \log \frac{|\G'|}{\delta}}{n} \right)^{1/(2 - q)} ,
\end{align}
then it holds with probability at least $1 - \frac{\delta}{|\G'| - 1}$ that $\frac{1}{n} \sum_{j=1}^n \xsloss(Z_j) > 0$. The result follows by taking a union bound over the subclass of $\G' \setminus \{f_1\}$ for which \eqref{eqn:bernstein-level} holds.
\end{proof}

\section{Proofs for Model Selection Aggregation (Section \ref{sec:ms-aggregate})}

\begin{proof}[of Theorems \ref{thm:pm-ewoo} and \ref{thm:pm-cb}]
The starting point is the following bound for the progressive mixture rule when run with prior $\pi$ and parameter $\eta$, due to Audibert (see Theorem 1 of \cite{audibert2008progressive}, but the result was already proved in an earlier technical report version of \cite{audibert2009fast} (see Corollary 4.1 and Lemma 3.3 therein). 
When run on an $n$-sample, an online-to-batch conversion of the progressive mixture rule yields a hypothesis $\hat{f}$ satisfying
\begin{align*}
\E_{Z^n} \left[ \E_Z \left[ \loss(Y, \hat{f}(X)) \right] \right] 
\leq \inf_{\rho \in \Delta(\F)} \left\{ 
              \E_{f \sim \rho} \E_Z \left[ \loss(Y, f(X)) \right] 
              + \frac{D(\rho \pipes \pi)}{\eta (n + 1)}
          \right\} 
\end{align*}
where $D(\rho \pipes \pi)$ is the KL-divergence of $\rho$ from $\pi$.\footnote{We say ``of $\rho$ from $\pi$'' because the Bregman divergence form of the KL-divergence, which makes clear that the KL-divergence is measure of the curvature of negative Shannon entropy between $\rho$ and $\pi$ when considering a first-order Taylor expansion around $\pi$.} 
Note that this bound does not explicitly depend on the boundedness nor the Lipschitz continuity of the loss. 

Fix some $\rho^*$ that nearly obtains the infimum (or obtains it, if possible). 
Then
\begin{align*}
\E_{Z^n} \left[ \E_Z \left[ \loss(Y, \hat{f}(X)) \right] \right] 
- \E_{f \sim \rho^*} \E_Z \left[ \loss(Y, f(X)) \right] 
\leq \frac{D(\rho^* \pipes \pi)}{\eta (n + 1)} .
\end{align*}

We cannot apply the boosting the confidence trick just yet as the LHS is not a nonnegative random variable; this issue motivates the following rewrite.
\begin{align*}
&\E_{Z^n} \left[ \E_Z \left[ \loss(Y, \hat{f}(X)) \right] \right] 
   - \E_Z \left[ \loss(Y, f^*(X)) \right] \\
&\leq \underbrace{\E_{f \sim \rho^*} \E_Z \left[ \loss(Y, f(X)) \right] 
          - \E_Z \left[ \loss(Y, f^*(X)) \right]}_{\gap(\rho^*, f^*)} 
          + \frac{D(\rho^* \pipes \pi)}{\eta (n + 1)} .
\end{align*}

When the progressive mixture rule is run on $K$ independent samples, yielding hypotheses $f^{(1)}, \ldots, f^{(K)}$, then Markov's inequality implies that with probability at least $1 - e^{-K}$ (over the $(K n)$-sample) there exists $j \in [K]$ for which 
\begin{align*}
&\E_Z \left[ \loss(Y, f^{(j)}(X)) \right] - \E_Z \left[ \loss(Y, f^*(X)) \right] \\
&\leq e \left( 
                 \gap(\rho^*, f^*) 
                 + \frac{D(\rho^* \pipes \pi)}{\eta (n + 1)} 
             \right) ,
\end{align*}
which can be re-expressed as
\begin{align*}
&\E_Z \left[ \loss(Y, f^{(j)}(X)) \right] - \E_Z \left[ \loss(Y, f^*(X)) \right] \\
&\leq e \cdot \gap(\rho^*, f^*) 
          + \frac{e \cdot D(\rho^* \pipes \pi)}{\eta (n + 1)} \\
&= e \left( 
          \inf_{\rho \in \Delta(\F)} \left\{ 
              \E_{f \sim \rho} \E_Z \left[ \loss(Y, f(X)) \right] 
              + \frac{D(\rho \pipes \pi)}{\eta (n + 1)}
          \right\} 
           - \E_Z \left[ \loss(Y, f^*(X)) \right] 
      \right) \\
&= e \cdot \BR{n} .
\end{align*}

In the sequel, we assume that this high probability event has occurred.

Now, let $\widetilde{\F} = \conv \left( \{f^{(1)}, \ldots, f^{(K)}\} \right)$. 
Clearly, $f^{(j)} \in \widetilde{\F}$, and so we also have
\begin{align} \label{eqn:gap-bound}
\inf_{f \in \widetilde{\F}} \E_Z \left[ \loss(Y, f(X)) \right] 
\leq e \cdot \BR{n} .
\end{align}

It therefore is sufficient to learn over $\widetilde{\F}$ and compete with its risk minimizer. But this is only a $K$-dimensional problem, and if $\delta = e^{-K}$, we have $K = \log \frac{1}{\delta}$. To see why the problem is only $K$-dimensional, consider the transformed problem, where
\begin{align*}
\tilde{x} = \begin{pmatrix} f^{(1)}(x) \\ \vdots \\ f^{(K)}(x) \end{pmatrix} .
\end{align*}

The loss can now be reparameterized, from
\begin{align*}
\loss \colon \widetilde{\F} \to \reals
\quad \text{with} \quad
\loss \colon f \mapsto \loss(y, f(x))
\end{align*}
\begin{flalign*}
\text{to} && 
\tilde{\loss} \colon \Delta^{K-1} \to \reals
\quad \text{with} \quad
\tilde{\loss} \colon q \mapsto \loss(y, \langle q, \tilde{x} \rangle) , &&
\end{flalign*}
where $\Delta^{K-1}$ is the $(K - 1)$-dimensional simplex 
$\left\{ q \in [0,1]^K \colon \sum_{j=1}^K q_j = 1 \right\}$.

$\Delta^{K-1}$ is clearly convex and the loss is $\eta$-exp-concave with respect to $q \in \Delta^{K-1}$; to see the latter, observe that from the $\eta$-exp-concavity of the loss with respect to $\hat{y} = \langle q, \tilde{x} \rangle$:
\begin{align*}
\E_{q \sim P_q} \left[ e^{-\eta \loss(y, \langle q, \tilde{x} \rangle)} \right]
&\leq e^{-\eta \loss(y, \E_{q \sim P_q} \left[ \langle q, \tilde{x} \rangle \right])} \\
&= e^{-\eta \loss(y, \langle \E_{q \sim P_q} [ q ], \tilde{x} \rangle)} .
\end{align*}
Lastly, the loss is still bounded by $B$ since $\widetilde{\F}$ consists only of convex aggregates of $\hat{f}_1, \ldots, \hat{f}_K$, themselves convex aggregates over $\F$ (and we assumed boundedness of the loss with respect to the original class).

We now can proceed in two ways. The high probability bound for EWOO (the first display after Corollary \ref{cor:online-to-batch}) applies immediately. This bound can be simplified to (taking $d = K = \lceil \log(2/\delta) \rceil$)
\begin{align*}
O \left( 
  \frac{\sqrt{B} \left( \log \frac{1}{\delta} + \sqrt{\log \frac{1}{\delta}} \log n \right)}
         {\eta n} 
  + \frac{B \left( \log \log n + \log \frac{1}{\delta} \right)}{n} 
\right) ,
\end{align*}
which, in light of \eqref{eqn:gap-bound}, proves Theorem \ref{thm:pm-ewoo}.

If we further assume the loss framework of \cite{gonen2016tightening}, then $\tilde{\loss}$ still satisfies $\alpha$-strong convexity in the sense needed because, conditional on the actual prediction $\hat{y}$, the loss $\tilde{\loss}$ is the same as loss $\loss$. Hence, the bound \eqref{eqn:boosted-gonen-bound} \textsc{ConfidenceBoost} from Corollary \ref{cor:whp-bounds} applies (taking $d = K = \lceil \log \frac{3}{\delta} \rceil$), finishing the proof of Theorem \ref{thm:pm-cb}.
\end{proof}

\end{document}